\documentclass[conference]{IEEEtran}
\IEEEoverridecommandlockouts
\usepackage{cite}
\usepackage{amsmath,amssymb,amsfonts}

\newcommand{\wrap}{\operatorname{wrap}}
\usepackage{amsthm}

\theoremstyle{plain}
\newtheorem{theorem}{Theorem}[section]

\newtheorem{lemma}[theorem]{Lemma}

\theoremstyle{definition}

\newtheorem{assumption}[theorem]{Assumption}
\theoremstyle{remark}

\usepackage{cite}
\usepackage{amsmath,amssymb,amsfonts}
\usepackage{graphicx}
\usepackage{subcaption}
\usepackage{textcomp}
\usepackage{xcolor}
\usepackage{booktabs}
\usepackage{algorithm}
\usepackage{algpseudocode}

\usepackage{multirow}

\usepackage{algcompatible}
\usepackage{subfig}
\def\BibTeX{{\rm B\kern-.05em{\sc i\kern-.025em b}\kern-.08em
    T\kern-.1667em\lower.7ex\hbox{E}\kern-.125emX}}
\begin{document}

\title{A Stable Aggregation Method for Quantum Federated Learning\\}
\author{\IEEEauthorblockN{Shanika Nanayakkara and Shiva Raj Pokhrel}\thanks{Authors are with the IoT \& Software Engineering Research Lab, Deakin University, Geelong, VIC, Australia; email: s.nanayakkara@deakin.edu.au, shiva.pokhrel@deakin.edu.au}}

\maketitle

% \begin{abstract}
% Federated learning (FL) over quantum-enabled and heterogeneous communication
% networks is challenged by uneven client quality, stochastic fidelity variation,
% device instability, and the periodic geometry of variational quantum model
% parameters. Classical aggregation rules typically assume Euclidean parameter
% spaces and uniform communication reliability, which can lead to geometrically
% inconsistent updates when quantum rotation parameters lie near angular wrap-around
% boundaries. This paper proposes a self-consistent midpoint adaptive aggregation
% framework for quantum federated learning. The method integrates
% quality-of-service-aware client weighting, torus-consistent angular aggregation,
% and bounded midpoint refinement to determine a stable next server state rather
% than directly jumping to the client aggregation target. Controlled angular stress
% tests show that the proposed method preserves benign-case behaviour, corrects
% seam-induced Euclidean failures, and achieves small fixed-point residuals.
% Experiments on medical and financial datasets demonstrate improved
% accuracy--stability behaviour, while IBM quantum hardware validation confirms
% that the controlled angular outputs remain physically meaningful under
% real-device execution. The results indicate that self-consistent,
% QoS-aware, and geometry-aware aggregation can improve the stability of
% federated learning in emerging quantum and heterogeneous network environments.
% \end{abstract}
\begin{abstract}
Quantum federated learning (QFL) enables clients to train quantum neural network (QNN) models without sharing private data. 
We find that aggregation in QFL is unstable under heterogeneous data, unreliable communication, variable fidelity, latency, and quantum hardware noise. Moreover, QFL is non-trivially challenging because several QNN parameters are periodic angles, where Euclidean averaging often fails to capture the inherent dynamics. We develop a novel self-consistent midpoint aggregation method for stable QFL design and implementation. 
We combine QoS-aware client weighting, circular parameter aggregation, and bounded midpoint-based update control. We perform several angular tests and IBM real Quantum machines experiments for validation confirming our approach. Extensive evaluations and experiments on medical and financial datasets show improved stability, lower volatility, and competitive accuracy. 

\end{abstract}
\begin{IEEEkeywords}
Quantum federated learning, adaptive aggregation, geometry-aware aggregation,
self-consistent midpoint, quality of service, angular aggregation, quantum neural
networks.
\end{IEEEkeywords}

\section{Introduction}

Federated learning (FL)~\cite{b1,b3, 10758814, 11002682} enables multiple clients to train a shared global model without exposing their private data. 
Its practical performance, however, is strongly limited by the quality of server-side aggregation. 
Even in classical FL, with Federated Averaging (FedAvg~\cite{b1}), the aggregation is affected by non-IID data, unbalanced client samples, partial participation, and heterogeneous computing or communication resources~\cite{11029580,10648926}. 
These factors create biased local updates, client drift, and unstable convergence. 
This has motivated robust aggregation and correction methods such as FedProx~\cite{li2020fedprox} and SCAFFOLD~\cite{b3}.

In quantum federated learning (QFL)~\cite{11002682, chehimi2022qfl, 10758814}, the aggregation problem illustrated in Fig~\ref{fig:figg2} becomes more challenging. Observe in Fig~\ref{fig:figg2} that
QFL extends federated optimization to distributed quantum and hybrid quantum--classical models, where clients train parameterized quantum circuits and the server aggregates quantum model parameters~\cite{chehimi2022qfl,chehimi2023foundations,quantumfed2021, 10758814}. 
Unlike classical model parameters, many quantum neural network (QNN) parameters are rotation angles with periodic geometry. 
Thus, direct Euclidean averaging can produce geometrically inconsistent updates, especially near angular wrap-around boundaries. 
Moreover, client updates in quantum-enabled networks are affected not only by data heterogeneity but also by quantum-channel fidelity, latency, decoherence, shot noise, and device instability~\cite{11314201}. 
Therefore, QFL aggregation must be both reliability-aware and geometry-aware. 
This motivates the need for aggregation rules that account for client quality while respecting the circular or torus-valued structure of quantum model parameters.

In such QFL settings, the quality of a
client update may depend not only on data heterogeneity but also on quantum
communication and device conditions, including teleportation fidelity~\cite{11480649},
entanglement quality, latency, decoherence, and noisy intermediate-scale quantum
(NISQ) device instability~\cite{bennett1993teleportation,kimble2008quantum,
wehner2018quantum_internet,preskill2018nisq}. Consequently, treating all client
updates as equally reliable, or aggregating them only according to local data
size, can be suboptimal. A QFL aggregation rule should therefore account for
both learning-side heterogeneity and physical-layer reliability when determining
the contribution of each client to the global quantum model.
\begin{figure}[t]
    \centering
    \includegraphics[width=0.890\linewidth]{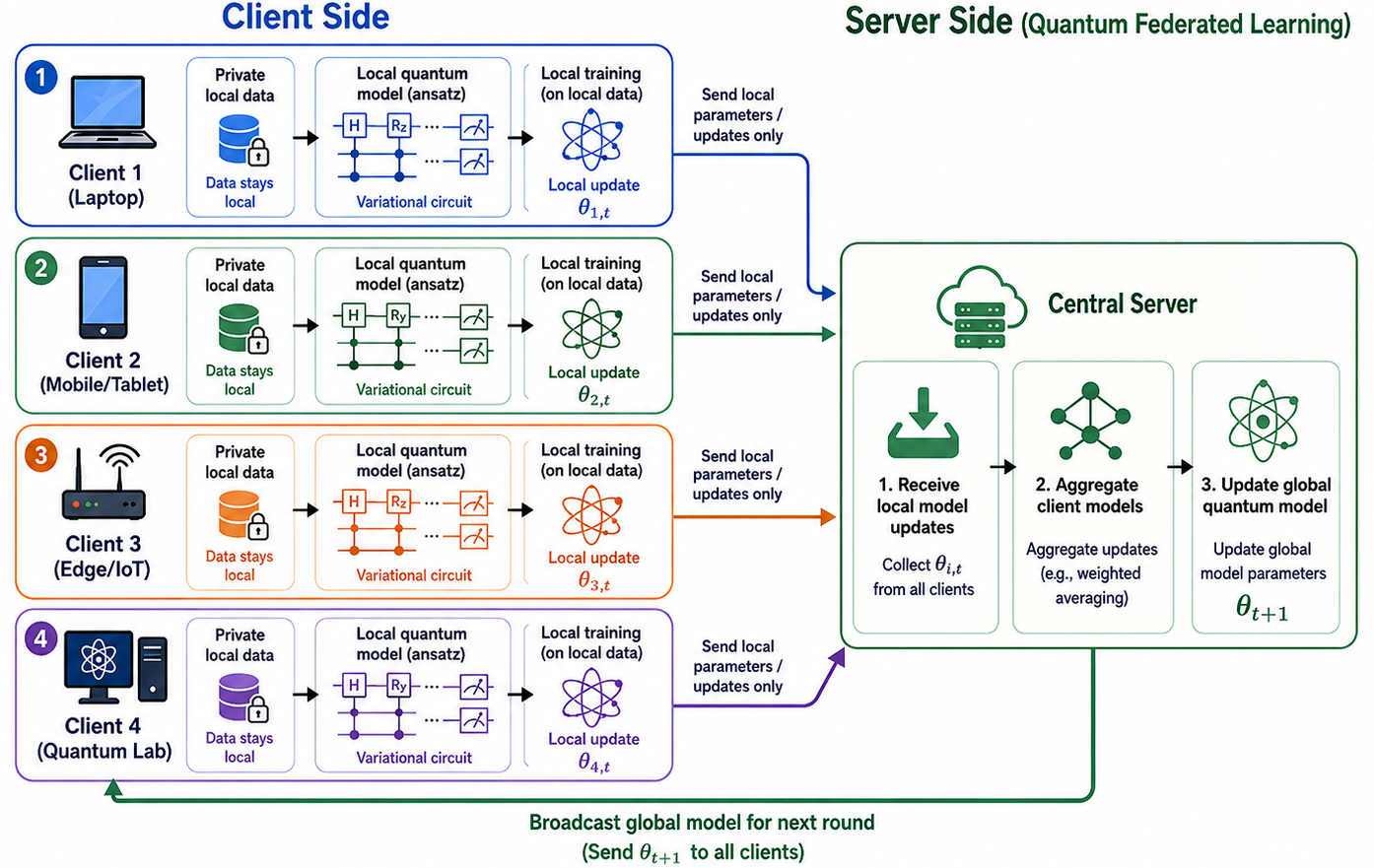}
    \caption{Abstract view of Quantum Federated Learning}
    \label{fig:figg2}
\end{figure}

A second and more QNN-specific challenge arises from the geometry of variational
quantum circuit parameters. Many trainable QNN parameters are rotation angles,
and therefore satisfy the periodic equivalence $\theta \equiv \theta + 2\pi$.
Therefore, parameters close to \(+\pi\) and
\(-\pi\) are geometrically adjacent on the quantum parameter circle, even though
their Euclidean coordinates appear far apart. Naive Euclidean averaging can
therefore move the aggregate toward an artificial value near zero, whereas
circular aggregation preserves the correct angular geometry~\cite{Mardia2000Directional,Fisher1993Circular} and enables adaptive aggregation~\cite{NanayakkaraA2G}.

%\subsection{Motivating Example: Euclidean Seam Artefact}
%\label{subsec:euclidean_seam_artefact}

\subsection{QFL Aggregation Problem and Ideas}
Fig.~\ref{fig:angular_seam_motivation} illustrates the angular aggregation
problem using a single-parameter slice of the QNN parameter space. Although a
full QNN model contains many rotation parameters, the one-dimensional example
captures the essential issue. The ordinary Euclidean mean maps seam-adjacent
client angles to \(14.35^\circ\), whereas the circular mean remains near
\(176.4^\circ\), which is geometrically consistent with the client cluster
around the \(-180^\circ/+180^\circ\) seam.

To this end, we introduce the following three big ideas. We develop Adaptive Aggregation with two Gains (A2G)~\footnote{A preliminary version of A2G is presented in the International Conference on Quantum Communications, Networking, and Computing (QCNC 2026)~\cite{NanayakkaraA2G}.} as a novel framework that uses client reliability and update-control gains to compute a stable global model update in quantum federated learning.

$\circ$\;\textit{QoS-weighted aggregation:} 
QoS-weighted aggregation gives more importance to clients with more reliable updates. 
The client weight depends on data size, quantum fidelity, latency, and update stability. 
Reliable clients contribute more to the global model, while noisy or delayed clients contribute less.

$\circ$\;\textit{Circular A2G:} 
Circular A2G is an adaptive aggregation method for quantum model parameters that are angles. 
It uses circular geometry instead of ordinary Euclidean averaging. 
This avoids wrong updates when angles are close to the $-\pi/\pi$ wrap-around boundary.

$\circ$\;\textit{MP-A2G:} 
MP-A2G means midpoint-projected A2G. 
It first computes a geometry-aware
aggregation direction and then applies a one-shot midpoint-based correction
before updating the server model. This aims to reduce overly aggressive movement
toward an unstable aggregate. We discuss details in Sec~III later.% but it does not iteratively enforce midpoint self-consistency

Importantly, the A2G circular update, midpoint-projected A2G update, and
SCM-A2G update do not collapse toward the misleading Euclidean aggregate.
Instead, starting from the current global point
\(\theta_t=-170^\circ\), they move conservatively toward the
geometry-consistent region of the angular parameter space. This behaviour is
essential: it shows that the proposed updates respect the circular topology of
the quantum parameter manifold and avoid artificial averaging artifacts caused
by treating angles as ordinary Euclidean scalars.

It should be noted that our idea of angular discrepancy has a direct hardware-level interpretation, as shown
in Table~\ref{tab:ibm_geometry_validation_compact}. In our IBM quantum hardware
validation, each aggregated angle is encoded into a single-qubit
\(R_y(\theta)\) circuit and evaluated using the Pauli-\(Z\) expectation value.
The Euclidean mean yields a positive hardware measurement, indicating that it
realizes a physically different quantum state from the geometry-consistent
solutions. In contrast, the circular, midpoint-projected, and SCM-based updates
produce negative IBM hardware measurements that closely match the corresponding
simulator values. This demonstrates that the choice of aggregation geometry is
not merely a mathematical detail; it directly determines the quantum state
implemented on real hardware.

Therefore, geometry-aware consistent aggregation is both theoretically necessary
and experimentally meaningful. By preserving the intrinsic angular structure of
the parameter space, the proposed A2G and SCM-A2G updates avoid spurious
Euclidean averaging, maintain consistency with the intended quantum evolution,
and produce hardware-realized states that align with the true circular geometry
of the model.

\subsection{Why Midpoint Self-Consistency is Needed}
\label{subsec:midpoint_self_consistency}

\begin{figure}[t]
    \centering
    \includegraphics[width=1.0\linewidth]{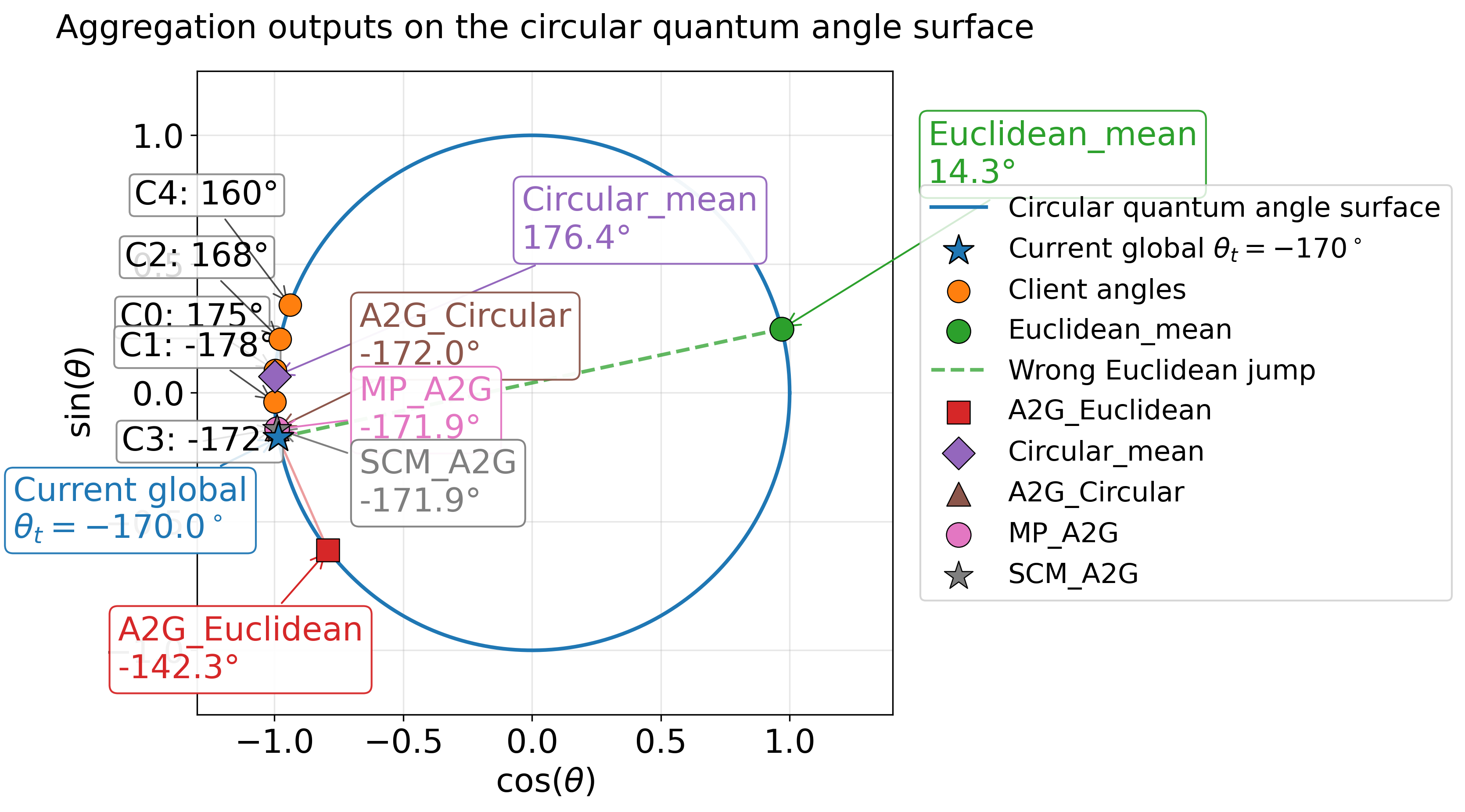}
    \caption{Motivating angular-seam example for QNN parameter aggregation. 
The current global parameter is $\theta_t=-170^\circ$, while client angles lie near the $-180^\circ/+180^\circ$ seam. 
Euclidean averaging maps these seam-adjacent angles to $14.35^\circ$, incorrectly placing the aggregate on the opposite side of the circle. 
In contrast, circular aggregation, midpoint-projected A2G, and SCM-A2G remain in the geometry-consistent angular region.}
    \label{fig:angular_seam_motivation}
\end{figure}
\begin{table}[htbp]
\centering
\scriptsize
\caption{IBM hardware validation of geometry-aware aggregation updates on \texttt{ibm\_fez}.}
\label{tab:ibm_geometry_validation_compact}
\resizebox{\columnwidth}{!}{
\begin{tabular}{lrrrr}
\hline
\textbf{Method} & 
\textbf{$\theta$} & 
\textbf{$\Delta\theta$} & 
\textbf{$Z_{\mathrm{sim}}$} & 
\textbf{$Z_{\mathrm{IBM}}$} \\
\hline
Euclidean mean  & 14.3500   & 184.3500 & 0.968800  & 0.954626 \\
A2G-Euclidean   & -142.3475 & 27.6525  & -0.791730 & -0.796451 \\
Circular mean   & 176.3660  & -13.6340 & -0.997989 & -0.999747 \\
A2G-Circular    & -172.0451 & -2.0451  & -0.990377 & -0.985044 \\
MP-A2G          & -171.8939 & -1.8939  & -0.990009 & -0.987072 \\
SCM-A2G         & -171.9048 & -1.9048  & -0.990035 & -0.985044 \\
\hline
\end{tabular}
}
\vspace{1mm}
\begin{flushleft}
\footnotesize
$\theta$ and $\Delta\theta$ are reported in degrees.
\end{flushleft}
\end{table}
Circular aggregation corrects the angular seam artefact, but it still computes
the client-supported direction mainly from the current global model. In noisy
and heterogeneous QFL, this first direction may be affected by non-IID client
updates, stochastic local QNN training, finite-shot noise, and QoS variation. A
direction that appears suitable at the current point may become less reliable
along the induced movement.

To address this issue, as shown in the Figure \ref{fig:SCM_midpointProcess} midpoint-projected aggregation introduces an intermediate
check. Before accepting the final server update, the server evaluates a
midpoint along the candidate movement and recomputes the client-supported
direction from that midpoint. This midpoint decision acts as a geometry-aware
stability check: it tests whether the proposed movement remains meaningful
after the server has begun to move.

The proposed self-consistent midpoint aggregation further strengthens this idea.
Instead of applying only a one-shot midpoint correction, SCM-A2G accepts a
server movement only when the movement is supported by its own midpoint. Thus,
the next global model is not obtained by direct Euclidean averaging, nor merely
by a circular client mean. It is obtained as a QoS-weighted, torus-aware,
self-consistent movement from the current global model.

\begin{table}[t]
\centering
\scriptsize
\caption{Main abbreviations used in this work.}
\label{tab:main_abbreviations}
\begin{tabular}{l l}
\toprule
\textbf{Abbreviation} & \textbf{Meaning} \\
\midrule
FL & Federated Learning \\
QFL & Quantum Federated Learning \\
QNN & Quantum Neural Network \\
QoS & Quality of Service \\
A2G & Adaptive Aggregation with Two Gains \\
MP-A2G & Midpoint-Projected A2G \\
SCM-A2G & Self-Consistent Midpoint A2G \\
FedAvg & Federated Averaging \\
FedMRUR & Federated Manifold-Regularized Update Rule \\
FEDCOMPASS & Federated scheduling/coordination baseline \\
BAF & Bank Account Fraud dataset \\
SPSA & Simultaneous Perturbation Stochastic Approximation \\
\bottomrule
\end{tabular}
\end{table}

\begin{figure}[htb]
    \centering
    \includegraphics[width=0.8\linewidth]{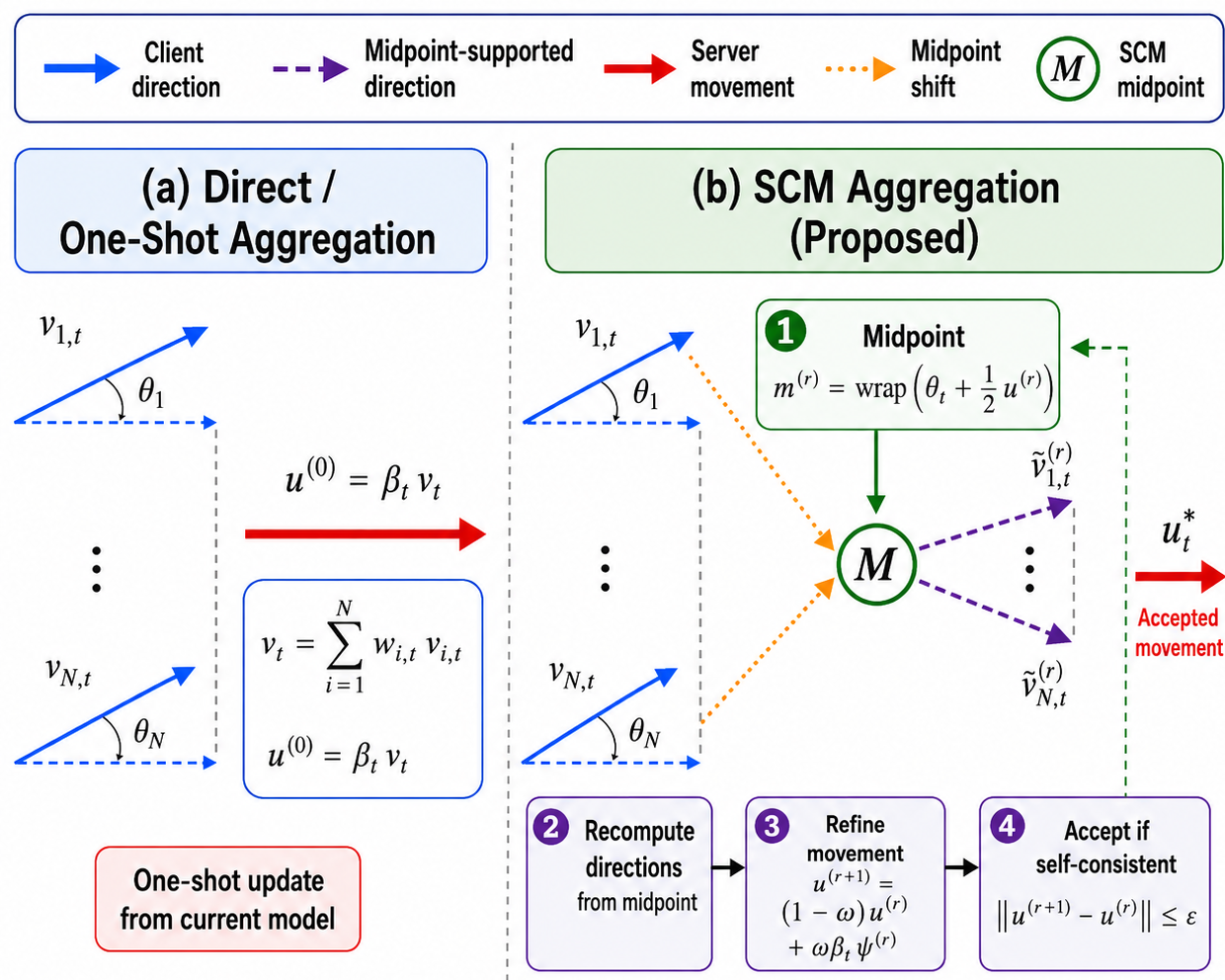}
    \caption{SCM (Self-Consistent Midpoint) Aggregation}
    \label{fig:SCM_midpointProcess}
\end{figure}

\subsection{Key Contributions}
\label{subsec:key_contributions}

Our main contributions are as follows.

\begin{enumerate}
    \item We identify a key aggregation challenge in QFL: client reliability varies, and QNN parameters lie on periodic angular spaces. We show that Euclidean averaging can fail near angular seams and can lead to different quantum observables on IBM hardware.

    \item We propose SCM-A2G-QFL, a QoS-aware and geometry-aware aggregation method with midpoint self-consistency.

\end{enumerate}
We show how FedAvg, QoS-weighted aggregation, circular A2G, MP-A2G, and SCM-A2G fit into one aggregation framework. We evaluate SCM-A2G-QFL using accuracy, validation performance, update norm, volatility, manifold dispersion, and SCM residual.
% The main contributions of this work are summarized as follows.

% \begin{enumerate}
%     \item We identify the coupled reliability--geometry problem in QFL
%     aggregation, where client updates are affected by heterogeneous QoS
%     conditions and QNN parameters lie on circular or torus-valued spaces.

%     \item We show, using a IBM hardware validation, that Euclidean
%     aggregation of seam-adjacent QNN angles can produce a physically different
%     quantum observable, motivating the need for circular and torus-aware
%     aggregation.

%     \item We propose SCM-A2G-QFL, a QoS-weighted self-consistent midpoint
%     aggregation rule that selects the next global model as a movement from the
%     current server STATE whose own midpoint supports the accepted update.

%     \item We establish a unified relationship between FedAvg, QoS-weighted
%     aggregation, circular A2G, midpoint-projected A2G, and the proposed SCM
%     refinement, enabling systematic ablation and comparison.

%     \item We evaluate the proposed method using accuracy, validation-selected
%     performance, update norm, manifold dispersion, volatility, and SCM residual,
%     demonstrating that SCM improves the stability and interpretability of the
%     global aggregation path under noisy and heterogeneous QFL settings.
% \end{enumerate}

\section{Related Work}
\label{sec:limitations_prior_work}

Existing aggregation methods address important parts of federated learning, but
they do not fully resolve the coupled reliability--geometry problem that arises
in QNN-based QFL. Classical FL methods such as FedAvg~\cite{b1},
FedProx~\cite{li2020fedprox}, and SCAFFOLD~\cite{b3}
mainly assume that model parameters can be aggregated in Euclidean space.
FedAvg performs direct weighted model averaging, FedProx introduces a proximal
local objective to reduce client drift, and SCAFFOLD uses control variates to
correct client drift. These methods are effective for many classical FL
settings, but they do not explicitly account for periodic QNN parameters, where
\(\theta \equiv \theta+2\pi\), and therefore may suffer from angular seam
artefacts when applied directly to variational quantum circuits.

Circular statistics and Riemannian averaging provide important foundations for
non-Euclidean aggregation. Circular means avoid angular seam artefacts by
averaging sine--cosine embeddings instead of raw angle values
~\cite{Mardia2000Directional,Fisher1993Circular}. Fréchet/Karcher means
generalize Euclidean averaging to manifold-valued data by minimizing geodesic
distances~\cite{Mancinelli2023}, while Riemannian FL extends federated
optimization to manifold-constrained models~\cite{Li2022RFedSVRG,Huang2026RFedAGS}.
However, these approaches primarily answer the question of where the
geometry-aware mean or manifold optimizer lies. They do not directly define a
QoS-aware server movement from the current global QNN model to the next global
model, nor do they require that the accepted movement remain supported by its
own midpoint.

Midpoint-based correction is also related to numerical methods for
manifold-constrained dynamics. For example, midpoint projection has been used
to improve the stability of stochastic differential equation integration on
manifolds~\cite{joseph2023midpoint}. However, such methods are not
designed for federated aggregation, QoS-weighted client trust, or torus-valued
QNN parameter updates. SCM-A2G-QFL adapts the midpoint principle to the QFL
server-side aggregation setting by making the accepted global movement
self-consistent with its own midpoint.

Recent QFL frameworks further motivate the need for quantum-specific
aggregation. Federated quantum machine learning has been studied in hybrid
classical--quantum settings~\cite{chen2021federated,chehimi2023foundations},
while recent periodic QFL aggregation approaches such as
FEDCOMPASS~\cite{Wang2026FEDCOMPASS} use circular aggregation for quantum
parameters together with client clustering. This confirms that periodicity is
important in QFL. Nevertheless, circular aggregation mainly identifies a
periodic client target; it does not by itself control how far the server should
move from the current global model in one communication round. Similarly,
manifold-regularized FL methods such as FedMRUR~\cite{An2023FedMRUR} address
model inconsistency and update-norm reduction under data heterogeneity, but
they are not designed for QoS-aware torus-valued QNN parameter aggregation.
Ahmad \emph{et al.} studied FL under statistical heterogeneity on
Riemannian manifolds~\cite{ahmad2023federated}, while
FedSPDnet extends geometry-aware FL to SPDNet models on symmetric positive
definite manifolds~\cite{pautrel2026fedspdnet}. These works support the
importance of non-Euclidean FL, but they do not address QoS-aware torus-valued
QNN aggregation or midpoint self-consistent server movement.

In contrast, SCM-A2G-QFL treats the server update itself as the object of
stabilization and update-centric. Circular and Riemannian averaging methods primarily answer where the geometry-aware client mean is located. In contrast, SCM-A2G-QFL asks which movement from the current global model should be accepted under QoS-weighted client evidence and midpoint self-consistency. This distinction is important in QNN-based QFL because a direct aggregate may be geometrically valid as an average but still too aggressive or unstable as a server update under non-IID data, quantum noise, and heterogeneous communication quality.
Table~\ref{tab:component_comparison_scm}
summarizes the key limitations of representative methods relative to the
proposed SCM-A2G-QFL framework.
Table~\ref{tab:component_comparison_scm} summarizes the position of the proposed method relative to representative classical FL, manifold-aware, and QFL aggregation approaches.

\begin{table*}[t]
\centering
\scriptsize
\caption{Component-level comparison of existing aggregation, manifold-aware, and quantum federated learning approaches with the proposed SCM-A2G-QFL method.}
\label{tab:component_comparison_scm}
\begin{tabular}{p{2.4cm} c c c c c c p{3.4cm}}
\hline
\textbf{Method} &
\textbf{FL} &
\textbf{QoS-aware} &
\textbf{ geometry} &
\textbf{Midpoint} &
\textbf{Self-consistent} &
\textbf{Main limitation relative to SCM-A2G-QFL} \\
\hline

FedAvg \cite{b1} &
\checkmark & -- & -- & -- & -- & 
euclidean model averaging, unsuitable for periodic QNN parameters. \\

FedProx \cite{li2020fedprox} &
\checkmark & -- & -- & -- & -- & 
Does not solve angular artifacts in server aggregation. \\

SCAFFOLD \cite{b3}  &
\checkmark & -- & -- & -- & -- & 
Rremains Euclidean and not QNN-angle specific. \\

Circular mean \cite{Mardia2000Directional, Fisher1993Circular} &
-- & -- & \checkmark & -- & -- & 
No server movement control. \\

Fréchet/Karcher \cite{Mancinelli2023} &
-- & optional & \checkmark & implicit & -- & 
No self-consistent current-to-next global update. \\

Reiman-FL \cite{Huang2026RFedAGS, ahmad2023federated} &
\checkmark & -- & \checkmark & -- & -- & 
Not specialized for QNN torus parameters or quantum fidelity/QoS telemetry. \\

%Neurips paper
FedMRUR \cite{An2023FedMRUR} &
\checkmark & -- & \checkmark & -- & -- & 
Uses manifold fusion for classical FL, but not periodic aggregation. \\

FEDCOMPASS \cite{Wang2026FEDCOMPASS} &
\checkmark & partial & \checkmark & -- & -- & 
Lacks movement control. \\

FedSPDnet~\cite{pautrel2026fedspdnet} &
\checkmark & -- & \checkmark & -- & -- &
No torus-valued QNN parameters angular validation. \\

 A2G &
\checkmark & \checkmark & \checkmark & -- &  partial &
Checks one point aggregation direction  \\

\textbf{Ours} &
\checkmark & \checkmark & \checkmark & \checkmark & \checkmark & 
QoS-weighted torus update, midpoint supports the accepted movement. \\
\hline
\end{tabular}
\end{table*}

\begin{figure}[hbt]
     \centering    \includegraphics[width=0.8\linewidth]{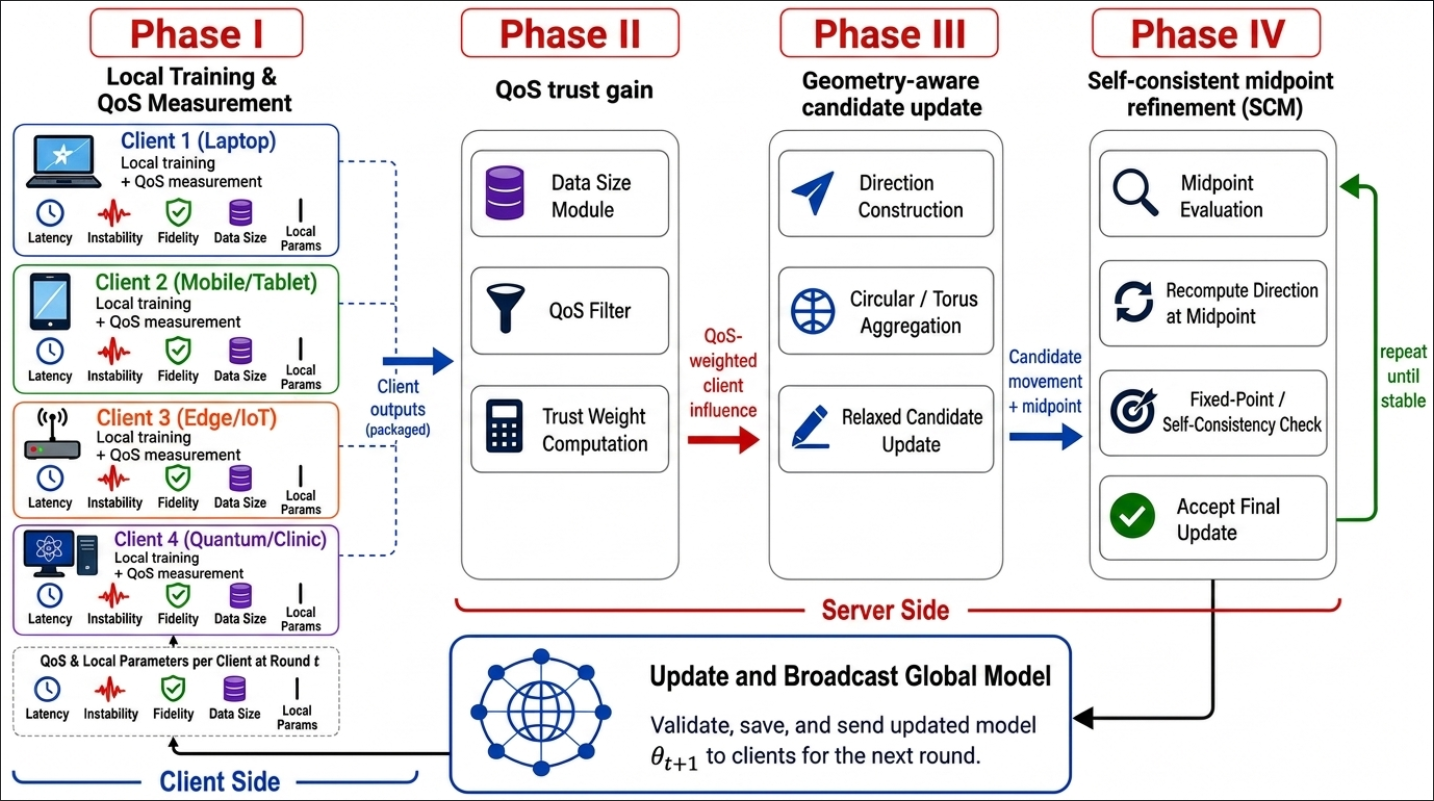}
  \caption{Proposed SCM-A2G-QFL aggregation pipeline.}
\label{fig} 
\end{figure}

\begin{table}[t]
\centering
\scriptsize
\caption{Key notation.}
\label{tab:notation_compact}
\begin{tabular}{l l}
\toprule
\textbf{Notation} & \textbf{Meaning} \\
\midrule
\(t\) & Communication round index \\
\(i\) & Client index \\
\(N\) & Number of clients \\
\(\theta_t\) & Current global angular parameter \\
\(\theta_i\) & Local client angular parameter \\
\(\theta_{t+1}\) & Updated server parameter \\
\(\boldsymbol{\theta}\) & QNN parameter vector \\
\(\Theta\) & Torus-valued angular parameter space \\
\(\wrap(\cdot)\) & Principal-angle wrapping operator \\
\(d_{\mathbb{S}^1}\) & Circular distance on the unit circle \\
\(\bar{\theta}_{\mathrm{Euc}}\) & Euclidean aggregation target \\
\(\bar{\theta}_{\mathrm{circ}}\) & Circular aggregation target \\
\(\theta_{\mathrm{MP}}\) & Midpoint-projected server update \\
\(\theta_{\mathrm{SCM}}\) & Self-consistent midpoint server update \\
\(\alpha\) & QoS sensitivity gain \\
\(\beta\) & Geometry relaxation gain \\
\(q_{i,t}\) & Client QoS score \\
\(w_{i,t}\) & Normalized aggregation weight \\
\(F_{i,t}\) & Fidelity-based quality signal \\
\(L_{i,t}\) & Latency signal \\
\(V_{i,t}\) & Instability signal \\
\(R_{\mathrm{SCM}}\) & SCM fixed-point residual \\
\(\epsilon\) & Numerical stability constant \\
\bottomrule
\end{tabular}
\end{table}
\section{Proposed SCM-A2G-QFL Method}
\label{sec:proposed_method}

The SCM-A2G-QFL framework is developed specifically for quantum federated learning.
We addresses two interconnected challenges: heterogeneous client reliability and the non-Euclidean geometry of quantum parameters.
At each communication
round, clients locally train QNN models and return model parameters together
with QoS-related indicators.
The server then computes reliability-aware client weights, constructs a geometry-aware candidate direction, and refines the accepted global update using midpoint self-consistency.

Let \(\boldsymbol{\theta}_t \in \mathbb{T}^d\) denote the global QNN parameter
vector at communication round \(t\), where \(\mathbb{T}^d\) denotes the
\(d\)-dimensional torus induced by periodic quantum rotation parameters. Let
\(\boldsymbol{\theta}_{i,t}\) be the local model returned by client \(i\), and
let \(p_i=|D_i|/\sum_j |D_j|\) denote the data-size prior of client \(i\). The
goal is to compute the next global parameter vector
\(\boldsymbol{\theta}_{t+1}\) as a controlled, QoS-weighted, geometry-aware
movement from \(\boldsymbol{\theta}_t\).

The first component of SCM-A2G-QFL is the QoS trust gain. In heterogeneous QFL,
not all client updates should contribute equally to the server aggregation. A
client may have a large local dataset but poor quantum-channel reliability, high
communication latency, or unstable local model behaviour. Therefore, before
constructing the global update direction, the server assigns each client a
QoS-aware trust weight.

For client \(i\) at communication round \(t\), let \(F_{i,t}\) denote the
estimated teleportation fidelity or quantum-channel reliability, \(\tau_{i,t}\)
denote the communication latency, and \(V_{i,t}\) denote an instability measure,
such as the variance of the fidelity estimate or the local model variation. We
define the QoS score as
\begin{equation}
q_{i,t}
=
\frac{F_{i,t}^{\alpha}}
{(\tau_{i,t}+\epsilon)^{\gamma}(V_{i,t}+\epsilon)^{\delta}},
\label{eq:qos_score}
\end{equation}
where \(\alpha\) controls the fidelity gain, \(\gamma\) controls the latency
penalty, \(\delta\) controls the instability penalty, and \(\epsilon>0\) avoids
division by zero.

Let
\begin{equation}
p_i
=
\frac{|D_i|}
{\sum_{j=1}^{N}|D_j|}
\label{eq:data_size_prior}
\end{equation}
denote the data-size prior of client \(i\), where \(|D_i|\) is the number of
local training samples. The final normalized aggregation weight is then
\begin{equation}
w_{i,t}
=
\frac{p_i q_{i,t}}
{\sum_{j=1}^{N}p_j q_{j,t}}.
\label{eq:qos_weight}
\end{equation}

Thus, clients with high channel fidelity, low latency, stable local behaviour,
and sufficient data support receive larger aggregation influence. Conversely,
clients with unreliable communication or unstable updates are down-weighted
before the geometry-aware and SCM midpoint-refinement stages.

\subsection{Geometry-Aware Candidate Update}
\label{subsec:geometry_candidate_update}

After computing the QoS-aware weights, the server constructs a geometry-aware
candidate movement. In Euclidean FL, the server can directly average client
parameter vectors. However, QNN parameters are often rotation angles and
therefore satisfy the periodic equivalence
\(\theta \equiv \theta+2\pi\). Consequently, ordinary Euclidean subtraction can
be misleading near the \(-\pi/\pi\) angular seam. To preserve the torus geometry
of the QNN parameter space, we define the local client direction using the
wrapped angular difference:
\begin{equation}
\mathbf{v}_{i,t}
=
\operatorname{wrap}_{[-\pi,\pi)}
\left(
\boldsymbol{\theta}_{i,t}
-
\boldsymbol{\theta}_{t}
\right),
\label{eq:local_tangent_direction}
\end{equation}
where \(\boldsymbol{\theta}_{t}\) is the current global QNN parameter vector and
\(\boldsymbol{\theta}_{i,t}\) is the locally trained parameter vector returned
by client \(i\) at round \(t\).

The QoS-weighted global direction is then computed as
\begin{equation}
\mathbf{v}_{t}
=
\sum_{i=1}^{N}
w_{i,t}\mathbf{v}_{i,t},
\label{eq:weighted_tangent_direction}
\end{equation}
where \(w_{i,t}\) is the normalized QoS trust weight from
Eq.~\eqref{eq:qos_weight}. The initial geometry-aware candidate movement is
\begin{equation}
\mathbf{u}_{t}^{(0)}
=
\beta_t \mathbf{v}_{t},
\label{eq:initial_candidate_movement}
\end{equation}
where \(\beta_t\in(0,1]\) is the geometry gain controlling the size of the
server movement.

A one-shot circular A2G update would therefore be
\begin{equation}
\boldsymbol{\theta}_{t+1}^{\mathrm{A2G}}
=
\operatorname{wrap}_{[-\pi,\pi)}
\left(
\boldsymbol{\theta}_{t}
+
\mathbf{u}_{t}^{(0)}
\right)
=
\operatorname{wrap}_{[-\pi,\pi)}
\left(
\boldsymbol{\theta}_{t}
+
\beta_t\mathbf{v}_{t}
\right).
\label{eq:a2g_circular_update}
\end{equation}
This update respects angular periodicity and avoids Euclidean seam artefacts.
However, the direction \(\mathbf{v}_{t}\) is computed only from the current
global point \(\boldsymbol{\theta}_{t}\). In noisy and heterogeneous QFL, this
initial direction may no longer be sufficiently supported once the server begins
to move. This motivates the midpoint and self-consistency refinement introduced
next.

\subsection{Self-Consistent Midpoint Refinement}
\label{subsec:self_consistent_midpoint}

The midpoint refinement is introduced to avoid accepting a server movement that
is supported only at the current global point but becomes unreliable along the
movement path. 
The one-shot A2G movement \(\mathbf{u}_t^{(0)}=\beta_t\mathbf{v}_t\)
is used only as the initial candidate. SCM then treats the server movement as
a variable \(\mathbf{u}\) and refines it until the movement is supported by its
own midpoint. Given any candidate movement \(\mathbf{u}\), SCM defines the
induced midpoint as

\begin{equation}
\mathbf{m}_t(\mathbf{u})
=
\operatorname{wrap}_{[-\pi,\pi)}
\left(
\boldsymbol{\theta}_t
+
\frac{1}{2}\mathbf{u}
\right).
\label{eq:scm_midpoint}
\end{equation}

From this midpoint, the server recomputes the wrapped client directions:
\begin{equation}
\tilde{\mathbf{v}}_{i,t}(\mathbf{u})
=
\operatorname{wrap}_{[-\pi,\pi)}
\left(
\boldsymbol{\theta}_{i,t}
-
\mathbf{m}_t(\mathbf{u})
\right),
\label{eq:midpoint_client_direction}
\end{equation}
and forms the QoS-weighted midpoint-supported direction
\begin{equation}
\boldsymbol{\psi}_t(\mathbf{u})
=
\sum_{i=1}^{N}
w_{i,t}
\tilde{\mathbf{v}}_{i,t}(\mathbf{u}).
\label{eq:midpoint_direction_map}
\end{equation}
The map \(\boldsymbol{\psi}_t(\mathbf{u})\) may differ from the initial
direction \(\mathbf{v}_t\), because the client directions are evaluated from
the midpoint induced by \(\mathbf{u}\), rather than directly from
\(\boldsymbol{\theta}_t\).

Here, \(\mathbf{u}\) denotes a generic candidate server movement. The final
SCM-accepted movement is denoted by \(\mathbf{u}_t^\star\), which is the
candidate movement satisfying the midpoint fixed-point condition.
\begin{equation}
\mathbf{u}_t^\star
=
\beta_t
\boldsymbol{\psi}_t(\mathbf{u}_t^\star).
\label{eq:scm_fixed_point}
\end{equation}
Thus, the accepted server movement is not merely the first QoS-weighted
direction computed at \(\boldsymbol{\theta}_t\). Instead, it is a movement whose own midpoint continues to support the accepted
movement. The next global
model is then updated as
\begin{equation}
\boldsymbol{\theta}_{t+1}
=
\operatorname{wrap}_{[-\pi,\pi)}
\left(
\boldsymbol{\theta}_{t}
+
\mathbf{u}_t^\star
\right).
\label{eq:scm_global_update}
\end{equation}

In practice, \(\mathbf{u}_t^\star\) is approximated by an under-relaxed
fixed-point iteration. Starting from the geometry-aware candidate movement
\(\mathbf{u}_t^{(0)}=\beta_t\mathbf{v}_t\), SCM iterates
\begin{equation}
\mathbf{u}_t^{(r+1)}
=
(1-\omega)\mathbf{u}_t^{(r)}
+
\omega\beta_t
\boldsymbol{\psi}_t(\mathbf{u}_t^{(r)}),
\label{eq:scm_under_relaxed_iteration}
\end{equation}
where \(\omega\in(0,1]\) is the solver relaxation factor. If the midpoint
recomputed direction disagrees with the current candidate movement, the
iteration adjusts the movement through
\(\boldsymbol{\psi}_t(\mathbf{u}_t^{(r)})\). Hence, SCM changes the accepted
movement according to the QoS-weighted client evidence observed from the
midpoint, rather than accepting the one-shot direction.

The iteration stops when
\begin{equation}
\left\|
\operatorname{wrap}_{[-\pi,\pi)}
\left(
\mathbf{u}_t^{(r+1)}-\mathbf{u}_t^{(r)}
\right)
\right\|_2
<
\varepsilon_{\mathrm{SCM}},
\label{eq:scm_stopping}
\end{equation}
or when a maximum number of SCM iterations is reached. The final iterate is then
used as the accepted movement \(\mathbf{u}_t^\star\).

We record the SCM residual as
\begin{equation}
R_{\mathrm{SCM},t}
=
\left\|
\operatorname{wrap}_{[-\pi,\pi)}
\left(
\mathbf{u}_t^\star
-
\beta_t\boldsymbol{\psi}_t(\mathbf{u}_t^\star)
\right)
\right\|_2,
\label{eq:scm_residual}
\end{equation}
which measures how closely the accepted movement satisfies the midpoint
self-consistency condition.

\begin{algorithm}[t]
\caption{QoS Trust Weight Computation}
\label{alg:qos_trust}
\begin{algorithmic}[1]
\REQUIRE Selected clients \(\mathcal{S}_t\); local dataset sizes
\(\{|D_i|\}_{i\in\mathcal{S}_t}\); QoS indicators
\(\{F_{i,t},\tau_{i,t},V_{i,t}\}_{i\in\mathcal{S}_t}\); QoS gains
\(\alpha,\gamma,\delta\); numerical constant \(\epsilon>0\)
\ENSURE Normalized QoS-aware weights \(\{w_{i,t}\}_{i\in\mathcal{S}_t}\)

\FOR{each selected client \(i\in\mathcal{S}_t\)}
    \STATE Compute data-size prior:$
    p_{i,t}
    =
    \frac{|D_i|}
    {\sum_{j\in\mathcal{S}_t}|D_j|}.$
    \STATE Compute QoS score:$
    q_{i,t}
    =
    \frac{F_{i,t}^{\alpha}}
    {(\tau_{i,t}+\epsilon)^{\gamma}(V_{i,t}+\epsilon)^{\delta}}.$
    \STATE Compute unnormalized trust:$
    a_{i,t}=p_{i,t}q_{i,t}.$
\ENDFOR

\FOR{each selected client \(i\in\mathcal{S}_t\)}
    \STATE Normalize aggregation weight:$
    w_{i,t}
    =
    \frac{a_{i,t}}
    {\sum_{j\in\mathcal{S}_t}a_{j,t}}.$
\ENDFOR
\STATE \textbf{return} \(\{w_{i,t}\}_{i\in\mathcal{S}_t}\)
\end{algorithmic}
\end{algorithm}

\begin{algorithm}[t]
\caption{SCM-A2G Server Update}
\label{alg:scm_a2g_update}
\begin{algorithmic}[1]
\REQUIRE Current global model \(\boldsymbol{\theta}_t\); selected client models
\(\{\boldsymbol{\theta}_{i,t}\}_{i\in\mathcal{S}_t}\); QoS weights
\(\{w_{i,t}\}_{i\in\mathcal{S}_t}\); geometry gain \(\beta_t\); relaxation
\(\omega\); maximum SCM iterations \(R_{\max}\); tolerance
\(\varepsilon_{\mathrm{SCM}}\)
\ENSURE Next global model \(\boldsymbol{\theta}_{t+1}\)

\FOR{each selected client \(i\in\mathcal{S}_t\)}
    \STATE Compute local wrapped direction:
    $\mathbf{v}_{i,t}
    =
    \operatorname{wrap}_{[-\pi,\pi)}
    \left(
    \boldsymbol{\theta}_{i,t}
    -
    \boldsymbol{\theta}_t
    \right).$
\ENDFOR

\STATE Compute QoS-weighted direction:
$
\mathbf{v}_t
=
\sum_{i\in\mathcal{S}_t}w_{i,t}\mathbf{v}_{i,t}.$

\STATE Initialize SCM movement:
$
\mathbf{u}_t^{(0)}=\beta_t\mathbf{v}_t.
$
\FOR{\(r=0,\ldots,R_{\max}-1\)}
    \STATE Compute midpoint:$
    \mathbf{m}_t^{(r)}
    =
    \operatorname{wrap}_{[-\pi,\pi)}
    \left(
    \boldsymbol{\theta}_t+\frac{1}{2}\mathbf{u}_t^{(r)}
    \right).$

    \FOR{each selected client \(i\in\mathcal{S}_t\)}
        \STATE Recompute direction from midpoint:$
        \tilde{\mathbf{v}}_{i,t}^{(r)}
        =
        \operatorname{wrap}_{[-\pi,\pi)}
        \left(
        \boldsymbol{\theta}_{i,t}
        -
        \mathbf{m}_t^{(r)}
        \right).$
    \ENDFOR

    \STATE Compute midpoint-supported direction:$
    \boldsymbol{\psi}_t^{(r)}
    =
    \sum_{i\in\mathcal{S}_t}
    w_{i,t}
    \tilde{\mathbf{v}}_{i,t}^{(r)}.$

    \STATE Compute fixed-point candidate:$
    \mathbf{u}_{\mathrm{fp},t}^{(r)}
    =
    \beta_t\boldsymbol{\psi}_t^{(r)}.$

    \STATE Apply under-relaxed refinement:$
    \mathbf{u}_t^{(r+1)}
    =
    (1-\omega)\mathbf{u}_t^{(r)}
    +
    \omega\mathbf{u}_{\mathrm{fp},t}^{(r)}.$
    \STATE Compute fixed-point change:$
    \Delta_t^{(r)}
    =
    \left\|
    \operatorname{wrap}_{[-\pi,\pi)}
    \left(
    \mathbf{u}_t^{(r+1)}-\mathbf{u}_t^{(r)}
    \right)
    \right\|_2.$
    \IF{\(\Delta_t^{(r)}<\varepsilon_{\mathrm{SCM}}\)}
        \STATE \textbf{break}
    \ENDIF
\ENDFOR

\STATE Set accepted movement:
$
\mathbf{u}_t^\star=\mathbf{u}_t^{(r+1)}.$

\STATE Update global model:
$
\boldsymbol{\theta}_{t+1}
=
\operatorname{wrap}_{[-\pi,\pi)}
\left(
\boldsymbol{\theta}_t+\mathbf{u}_t^\star
\right).$
\STATE \textbf{return} \(\boldsymbol{\theta}_{t+1}\)
\end{algorithmic}
\end{algorithm}

\begin{algorithm}[t]
\caption{SCM-A2G-QFL Training Procedure}
\label{alg:scm_a2g_qfl_training}
\begin{algorithmic}[1]
\REQUIRE Initial global QNN parameters \(\boldsymbol{\theta}_0\); clients
\(\{1,\ldots,N\}\); communication rounds \(T\); local training routine
\(\mathcal{L}\); QoS gains \(\alpha,\gamma,\delta\); geometry gain
\(\beta_t\); SCM parameters \(R_{\max},\omega,\varepsilon_{\mathrm{SCM}}\)
\ENSURE Final global model \(\boldsymbol{\theta}_T\)

\FOR{\(t=0,\ldots,T-1\)}
    \STATE Select participating clients \(\mathcal{S}_t\subseteq\{1,\ldots,N\}\).
    \STATE Server broadcasts \(\boldsymbol{\theta}_t\) to clients in \(\mathcal{S}_t\).

    \FOR{each selected client \(i\in\mathcal{S}_t\) in parallel}
        \STATE Initialize local QNN with \(\boldsymbol{\theta}_t\).
        \STATE Train locally using routine \(\mathcal{L}\) on private data \(D_i\).
        \STATE Return local model \(\boldsymbol{\theta}_{i,t}\).
        \STATE Estimate QoS indicators \(F_{i,t}\), \(\tau_{i,t}\), and \(V_{i,t}\).
    \ENDFOR

    \STATE Compute QoS-aware weights \(\{w_{i,t}\}_{i\in\mathcal{S}_t}\) using Algorithm~\ref{alg:qos_trust}.

    \STATE Compute \(\boldsymbol{\theta}_{t+1}\) using Algorithm~\ref{alg:scm_a2g_update}.

    \STATE Record diagnostics: global accuracy, validation loss, update norm, manifold dispersion, weight entropy, and SCM residual.
\ENDFOR

\STATE \textbf{return} \(\boldsymbol{\theta}_T\)
\end{algorithmic}
\end{algorithm}

\section{Convergence Analysis}
\label{sec:convergence_analysis}

We analyze the convergence behaviour of SCM-A2G-QFL under a
general non-convex federated objective. 
Detailed proofs are provided in the
Appendix. 
The analysis shows that the proposed update behaves as a
geometry-aware descent step whose error is controlled by client heterogeneity,
QoS-weight variance, angular wrapping error, and the SCM fixed-point residual.
\subsection{Assumptions}
\label{subsec:convergence_assumptions}

Let the global objective be
\begin{equation}
F(\boldsymbol{\theta})
=
\sum_{i=1}^{N}p_i F_i(\boldsymbol{\theta}),
\qquad
p_i=\frac{|D_i|}{\sum_{j=1}^{N}|D_j|},
\label{eq:global_objective}
\end{equation}
where \(F_i\) is the local objective of client \(i\), and
\(\boldsymbol{\theta}\in\mathbb{T}^d\) denotes the torus-valued QNN parameter
vector. For analysis, we work in a local tangent chart induced by the wrapped
angular difference \(\operatorname{wrap}_{[-\pi,\pi)}(\cdot)\).

\begin{assumption}
    Smoothness: Each local objective \(F_i\) and the global objective \(F\) are \(L\)-smooth in
the local tangent chart induced by the wrapped angular difference
\cite{li2020fedprox}., i.e.,
\begin{equation}
\|\nabla F_i(\boldsymbol{a})-\nabla F_i(\boldsymbol{b})\|
\leq
L\|\operatorname{wrap}(\boldsymbol{a}-\boldsymbol{b})\|,
\end{equation}
and similarly for \(F\).
\end{assumption}

\begin{assumption}
    {Unbiased local stochastic gradients: }
For client \(i\), the stochastic gradient estimator
\(\mathbf{g}_{i,t}\) satisfies
\begin{equation}
\mathbb{E}[\mathbf{g}_{i,t}]
=
\nabla F_i(\boldsymbol{\theta}_t),
\qquad
\mathbb{E}\|\mathbf{g}_{i,t}-\nabla F_i(\boldsymbol{\theta}_t)\|^2
\leq
\sigma_l^2.
\end{equation}
\end{assumption}

\begin{assumption}
{Bounded client heterogeneity:} 
The gradient dissimilarity across clients is bounded, as commonly assumed in
heterogeneous FL analyses~\cite{li2020fedprox,b3}.
\begin{equation}
\mathbb{E}\|\nabla F_i(\boldsymbol{\theta})-\nabla F(\boldsymbol{\theta})\|^2
\leq
\sigma_g^2.
\label{eq:heterogeneity_bound}
\end{equation}
\end{assumption}

\begin{assumption}{Bounded QoS weights:}
The QoS-aware aggregation weights satisfy $
w_{i,t}\geq 0,\qquad
\sum_{i=1}^{N}w_{i,t}=1,\qquad
w_{i,t}\leq w_{\max}.$
Moreover, the deviation between QoS weights and data-size priors is bounded:
\begin{equation}
\sum_{i=1}^{N}|w_{i,t}-p_i|
\leq
\rho_q.
\label{eq:qos_weight_deviation}
\end{equation}
\end{assumption}

\begin{assumption}[Bounded angular movement and SCM residual contribution]
\label{assump:bounded_movement_residual}
The wrapped local client directions and accepted SCM server movements are
bounded:
\begin{equation}
\|\mathbf{v}_{i,t}\|\leq G,
\qquad
\|\mathbf{u}_t^\star\|\leq U .
\label{eq:bounded_movement}
\end{equation}
Moreover, the SCM fixed-point residual contributes a bounded descent-scale
error:
\begin{equation}
\mathbb{E}\!\left[
\left\langle
\nabla F(\boldsymbol{\theta}_t),
\mathbf{e}_{\mathrm{SCM},t}
\right\rangle
\right]
\leq
\frac{\beta_t}{4}
\mathbb{E}\!\left[
\|\nabla F(\boldsymbol{\theta}_t)\|^2
\right]
+
C_3R_{\mathrm{SCM},t}^2 .
\label{eq:scm_residual_descent_scale}
\end{equation}
Here, \(R_{\mathrm{SCM},t}\) denotes the SCM residual contribution measured at
the descent scale. This condition is encouraged by angular wrapping, bounded
geometry gain \(\beta_t\), and the stopping tolerance used in the SCM
fixed-point solver.
\end{assumption}

\subsection{SCM Update Error Decomposition}
\label{subsec:scm_error_decomposition}

The ideal centralized descent direction at round \(t\) is
\(-\nabla F(\boldsymbol{\theta}_t)\). SCM-A2G instead applies the accepted
server movement
\begin{equation}
\mathbf{u}_t^\star
=
\beta_t \boldsymbol{\psi}_t(\mathbf{u}_t^\star)
+
\mathbf{e}_{\mathrm{SCM},t},
\label{eq:scm_residual_error}
\end{equation}
where
\begin{equation}
\mathbf{e}_{\mathrm{SCM},t}
=
\mathbf{u}_t^\star
-
\beta_t \boldsymbol{\psi}_t(\mathbf{u}_t^\star),
\qquad
\|\mathbf{e}_{\mathrm{SCM},t}\|
=
R_{\mathrm{SCM},t}.
\label{eq:scm_residual_vector}
\end{equation}
In the convergence analysis, \(R_{\mathrm{SCM},t}\) denotes the corresponding
descent-scale residual contribution defined through
Assumption~\ref{assump:bounded_movement_residual}. Thus,
\(R_{\mathrm{SCM},t}\) measures how much the remaining SCM fixed-point error
contributes to the one-step descent inequality.ndition.

The midpoint-supported direction can be decomposed as
\begin{equation}
\boldsymbol{\psi}_t(\mathbf{u}_t^\star)
=
-\nabla F(\boldsymbol{\theta}_t)
+
\boldsymbol{\xi}_{l,t}
+
\boldsymbol{\xi}_{g,t}
+
\boldsymbol{\xi}_{q,t}
+
\boldsymbol{\xi}_{m,t},
\label{eq:scm_direction_decomposition}
\end{equation}
where \(\boldsymbol{\xi}_{l,t}\) denotes stochastic local optimization error,
\(\boldsymbol{\xi}_{g,t}\) denotes client heterogeneity error,
\(\boldsymbol{\xi}_{q,t}\) denotes QoS reweighting bias, and
\(\boldsymbol{\xi}_{m,t}\) denotes midpoint displacement error.

Under
Assumptions~1--5, these terms are bounded in expectation by
\begin{equation}
\begin{aligned}
\mathbb{E}\|\boldsymbol{\xi}_{l,t}\|^2
&\leq C_l\sigma_l^2,
&
\mathbb{E}\|\boldsymbol{\xi}_{g,t}\|^2
&\leq C_g\sigma_g^2, \\
\mathbb{E}\|\boldsymbol{\xi}_{q,t}\|^2
&\leq C_q\rho_q^2G^2,
&
\mathbb{E}\|\boldsymbol{\xi}_{m,t}\|^2
&\leq C_mL^2\|\mathbf{u}_t^\star\|^2 .
\end{aligned}
\label{eq:error_bounds}
\end{equation}

\subsection{Descent Lemma}
\label{subsec:descent_lemma}

\begin{lemma}[One-step descent under SCM-A2G]
\label{lemma:scm_one_step_descent}
Under Assumptions~1--5, if the geometry gain satisfies
\(0<\beta_t\leq 1/L\), then the SCM-A2G update
\begin{equation}
\boldsymbol{\theta}_{t+1}
=
\operatorname{wrap}_{[-\pi,\pi)}
\!\left(\boldsymbol{\theta}_t+\mathbf{u}_t^\star\right)
\label{eq:scm_update_for_descent}
\end{equation}
satisfies
\begin{equation}
\begin{aligned}
\mathbb{E}\!\left[
F(\boldsymbol{\theta}_{t+1})
\right]
\leq\;&
\mathbb{E}\!\left[
F(\boldsymbol{\theta}_{t})
\right]
-
\frac{\beta_t}{2}
\mathbb{E}\!\left[
\left\|\nabla F(\boldsymbol{\theta}_t)\right\|^2
\right] \\
&+
C_1\beta_t
\left(
\sigma_l^2+\sigma_g^2+\rho_q^2G^2
\right) \\
&+
C_2\beta_t L^2
\mathbb{E}\!\left[
\left\|\mathbf{u}_t^\star\right\|^2
\right]
+
C_3 R_{\mathrm{SCM},t}^2 .
\end{aligned}
\label{eq:scm_descent_lemma}
\end{equation}
\end{lemma}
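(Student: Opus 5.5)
We will follow the standard smoothness-based descent argument for perturbed gradient methods, carried out in the local tangent chart induced by $\operatorname{wrap}_{[-\pi,\pi)}$. The first step is to note that in this chart the SCM update moves the iterate by exactly $\mathbf{u}_t^\star$: the wrapped displacement $\operatorname{wrap}(\boldsymbol{\theta}_{t+1}-\boldsymbol{\theta}_t)$ equals $\mathbf{u}_t^\star$ whenever the entries of $\mathbf{u}_t^\star$ lie in $[-\pi,\pi)$. Assumption~5 ($\|\mathbf{u}_t^\star\|\le U$, with $U$ taken below $\pi$) is what we would use to justify this. Then $L$-smoothness from Assumption~1 gives
\begin{equation*}
F(\boldsymbol{\theta}_{t+1})\le F(\boldsymbol{\theta}_t)+\langle\nabla F(\boldsymbol{\theta}_t),\mathbf{u}_t^\star\rangle+\tfrac{L}{2}\|\mathbf{u}_t^\star\|^2 .
\end{equation*}

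Next we substitute the residual form \eqref{eq:scm_residual_error} together with the decomposition \eqref{eq:scm_direction_decomposition}. Writing $\boldsymbol{\xi}_t=\boldsymbol{\xi}_{l,t}+\boldsymbol{\xi}_{g,t}+\boldsymbol{\xi}_{q,t}+\boldsymbol{\xi}_{m,t}$, the inner product becomes
\begin{equation*}
-\beta_t\|\nabla F(\boldsymbol{\theta}_t)\|^2+\beta_t\langle\nabla F(\boldsymbol{\theta}_t),\boldsymbol{\xi}_t\rangle+\langle\nabla F(\boldsymbol{\theta}_t),\mathbf{e}_{\mathrm{SCM},t}\rangle .
\end{equation*}
We bound the middle term with Young's inequality, $\beta_t\langle a,b\rangle\le \frac{\beta_t}{4}\|a\|^2+\beta_t\|b\|^2$, and use $\|\boldsymbol{\xi}_t\|^2\le 4\sum\|\boldsymbol{\xi}_{\cdot,t}\|^2$. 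Taking expectations and invoking \eqref{eq:error_bounds}, this produces $C_1\beta_t(\sigma_l^2+\sigma_g^2+\rho_q^2G^2)$ plus a midpoint term $4C_m\beta_tL^2\mathbb{E}\|\mathbf{u}_t^\star\|^2$. The residual inner product is handled directly by \eqref{eq:scm_residual_descent_scale}, which costs $\frac{\beta_t}{4}\mathbb{E}\|\nabla F\|^2+C_3R_{\mathrm{SCM},t}^2$. The two $\beta_t/4$ losses leave exactly the claimed coefficient $-\beta_t/2$.

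The remaining term is $\frac{L}{2}\mathbb{E}\|\mathbf{u}_t^\star\|^2$, which has to be absorbed into $C_2\beta_tL^2\mathbb{E}\|\mathbf{u}_t^\star\|^2$. We would first try the crude route: since $\beta_t\le 1/L$, $\frac{L}{2}=\frac{1}{2}\beta_t^{-1}\cdot\beta_t L$, but this only helps if $\beta_tL$ is bounded below. An alternative is to expand $\|\mathbf{u}_t^\star\|^2\le 2\beta_t^2\|\boldsymbol{\psi}_t\|^2+2R_{\mathrm{SCM},t}^2$. This bounds $\frac{L}{2}\|\mathbf{u}_t^\star\|^2$ by $L\beta_t^2\|\nabla F\|^2$ plus noise terms plus $L R_{\mathrm{SCM},t}^2$. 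The $L\beta_t^2\|\nabla F\|^2$ piece would then need $\beta_t$ small enough, say $\beta_t\le 1/(8L)$, to fit inside the gradient budget, which forces some reallocation of the Young constants. The midpoint piece contains $\|\mathbf{u}_t^\star\|^2$ again, which is self-referential and would be moved into the $C_2$ term.

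This absorption is the main obstacle. The stated constraint $\beta_t\le 1/L$ is slightly too weak for the clean constants, so the plan is to allow $C_1,C_2,C_3$ to depend on $L$ and $\beta_t L\le 1$ and to enlarge them as needed. If that fails, we would tighten the Young split from $\beta_t/4$ to $\beta_t/8$ to create room, accepting a restriction of the form $\beta_t\le c/L$ as the honest version of the hypothesis.
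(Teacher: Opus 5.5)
\textbf{Verdict: genuine gap.} You do not prove the lemma under the stated hypothesis $0<\beta_t\le 1/L$, and your conclusion that this hypothesis is too weak is wrong. Your route is the paper's route up to the last step. Both use smoothness in the chart, the substitution $\mathbf{u}_t^\star=-\beta_t\nabla F(\boldsymbol{\theta}_t)+\beta_t\boldsymbol{\Xi}_t+\mathbf{e}_{\mathrm{SCM},t}$, Young's inequality with a $\beta_t/4$ split on the $\boldsymbol{\Xi}_t$ cross term, and Assumption~5 for the residual cross term. The paper then disposes of $\frac{L}{2}\|\mathbf{u}_t^\star\|^2$ only by ``absorbing universal constants into $C_1,C_2,C_3$''. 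As you correctly diagnose, that requires $C_2\ge 1/(2\beta_tL)$, which would also turn the theorem's $\mathcal{O}(L^2\overline{U^2})$ term into $\mathcal{O}(L\overline{U^2}/\beta)$. So you found the weak point, but you do not close it:
\begin{itemize}
\item Letting the constants depend on ``$\beta_tL\le 1$'' cannot work. The factor needed is $1/(\beta_tL)$, and an upper bound on $\beta_tL$ does not control it.
\item In your second route, after both $\beta_t/4$ budgets are spent the coefficient is already exactly $-\beta_t/2$. Any $\|\nabla F\|^2$ contribution from a separately bounded quadratic term therefore breaks the claim. That is why you are pushed to $\beta_t\le c/L$, which proves only a weaker lemma.
\end{itemize}

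The missing idea is to expand the quadratic term exactly, so that its cross term shrinks the linear cross terms before you bound them. Set $\mathbf{g}=\nabla F(\boldsymbol{\theta}_t)$ and $\mathbf{w}=\beta_t\boldsymbol{\Xi}_t+\mathbf{e}_{\mathrm{SCM},t}$, so that $\mathbf{u}_t^\star=-\beta_t\mathbf{g}+\mathbf{w}$. Then
\[
\langle\mathbf{g},\mathbf{u}_t^\star\rangle+\tfrac{L}{2}\|\mathbf{u}_t^\star\|^2
=-\beta_t\bigl(1-\tfrac{L\beta_t}{2}\bigr)\|\mathbf{g}\|^2
+(1-L\beta_t)\bigl(\beta_t\langle\mathbf{g},\boldsymbol{\Xi}_t\rangle+\langle\mathbf{g},\mathbf{e}_{\mathrm{SCM},t}\rangle\bigr)
+\tfrac{L}{2}\|\mathbf{w}\|^2 .
\]
\begin{itemize}
\item Because $1-L\beta_t\ge 0$, you may multiply your Young bound and the Assumption~5 bound by it. This is the only place $\beta_t\le 1/L$ is used.
\item The $\|\mathbf{g}\|^2$ coefficient then becomes $-\beta_t+\frac{L\beta_t^2}{2}+\frac{\beta_t}{2}(1-L\beta_t)=-\frac{\beta_t}{2}$ exactly.
\item Using $\|\mathbf{e}_{\mathrm{SCM},t}\|=R_{\mathrm{SCM},t}$ and $\frac{L}{2}\|\mathbf{w}\|^2\le L\beta_t^2\|\boldsymbol{\Xi}_t\|^2+LR_{\mathrm{SCM},t}^2$, the remaining terms total at most $\beta_t\mathbb{E}\|\boldsymbol{\Xi}_t\|^2+(C_3+L)R_{\mathrm{SCM},t}^2$.
\item Finally, $\mathbb{E}\|\boldsymbol{\Xi}_t\|^2\le 4(C_l\sigma_l^2+C_g\sigma_g^2+C_q\rho_q^2G^2+C_mL^2\mathbb{E}\|\mathbf{u}_t^\star\|^2)$.
\end{itemize}
This gives the lemma exactly as stated, with $C_1=4\max(C_l,C_g,C_q)$, $C_2=4C_m$, and $C_3$ replaced by $C_3+L$. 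None of these depends on $\beta_t$, and no reallocation of the Young split is needed.

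Separately, the extra requirement $U<\pi$ in your first step is unnecessary. Lift $F$ to a $2\pi$-periodic function on $\mathbb{R}^d$. Since $\|\operatorname{wrap}(\mathbf{a}-\mathbf{b})\|\le\|\mathbf{a}-\mathbf{b}\|$, the lift has an $L$-Lipschitz gradient, and the smoothness inequality holds for every $\mathbf{u}_t^\star$.
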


\begin{proof}[Proof sketch]
The proof applies the \(L\)-smoothness inequality in the local tangent chart,
substitutes the SCM update decomposition in
Eq.~\eqref{eq:scm_direction_decomposition}, and bounds the stochastic local
optimization error, client heterogeneity error, QoS-induced weighting error,
midpoint approximation error, and fixed-point residual error using
Assumptions~1--5.
\end{proof}

\subsection{Main Convergence Result}
\label{subsec:main_convergence_result}

\begin{theorem}[Non-convex convergence of SCM-A2G-QFL]
\label{theorem:main_convergence_result}
Suppose Assumptions~1--5 hold and choose a constant geometry gain
\(\beta_t=\beta\) such that \(0<\beta\leq 1/L\). Then, after \(T\)
communication rounds, SCM-A2G-QFL satisfies 
\begin{equation}
\begin{aligned}
\frac{1}{T}\sum_{t=0}^{T-1}&
\mathbb{E}\!\left[
\|\nabla F(\boldsymbol{\theta}_t)\|^2
\right]
\leq\;
\frac{2(F(\boldsymbol{\theta}_0)-F^\star)}{\beta T} \\&
+
C_1(\sigma_l^2+\sigma_g^2+\rho_q^2G^2) +
C_2L^2
\frac{1}{T}\sum_{t=0}^{T-1}
\mathbb{E}\!\left[
\|\mathbf{u}_t^\star\|^2
\right]\\&
+ 
\frac{C_3}{\beta T}
\sum_{t=0}^{T-1}R_{\mathrm{SCM},t}^2 .
\end{aligned}
\label{eq:main_convergence_bound}
\end{equation}

Here \(F^\star\) is a lower bound of the global objective, and
\(C_1,C_2,C_3>0\) are constants independent of \(T\).

If the SCM residual is uniformly bounded by
\(R_{\mathrm{SCM},t}\leq\varepsilon_{\mathrm{SCM}}\), then

\begin{equation}
\begin{aligned}
\frac{1}{T}
\sum_{t=0}^{T-1}
\mathbb{E}\!\left[
\|\nabla F(\boldsymbol{\theta}_t)\|^2
\right]
\leq
&\;
\mathcal{O}\!\left(\frac{1}{\beta T}\right)
+
\mathcal{O}\!\left(\sigma_l^2+\sigma_g^2+\rho_q^2G^2\right) \\
&+
\mathcal{O}\!\left(L^2\overline{U^2}\right)
+
\mathcal{O}\!\left(\frac{\varepsilon_{\mathrm{SCM}}^2}{\beta}\right).
\end{aligned}
\label{eq:scm_simplified_rate}
\end{equation}

where $
\overline{U^2}
=
\frac{1}{T}\sum_{t=0}^{T-1}
\mathbb{E}\|\mathbf{u}_t^\star\|^2.$
\end{theorem}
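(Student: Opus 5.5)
The plan is to telescope the one-step descent inequality of Lemma~\ref{lemma:scm_one_step_descent} over $t=0,\dots,T-1$ with the constant gain $\beta_t=\beta\le 1/L$. Summing the lemma and cancelling the intermediate values $\mathbb{E}[F(\boldsymbol{\theta}_t)]$ leaves
\begin{equation*}
\mathbb{E}[F(\boldsymbol{\theta}_T)]\le F(\boldsymbol{\theta}_0)-\frac{\beta}{2}\sum_{t=0}^{T-1}\mathbb{E}\|\nabla F(\boldsymbol{\theta}_t)\|^2+C_1\beta T(\sigma_l^2+\sigma_g^2+\rho_q^2G^2)+C_2\beta L^2\sum_{t=0}^{T-1}\mathbb{E}\|\mathbf{u}_t^\star\|^2+C_3\sum_{t=0}^{T-1}R_{\mathrm{SCM},t}^2 .
\end{equation*}
Here $\boldsymbol{\theta}_0$ is deterministic, so $\mathbb{E}[F(\boldsymbol{\theta}_0)]=F(\boldsymbol{\theta}_0)$. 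Since $F^\star$ lower-bounds $F$ on $\mathbb{T}^d$, and is finite because $F$ is continuous on the compact torus, we may replace $\mathbb{E}[F(\boldsymbol{\theta}_T)]$ by $F^\star$.

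Next, move the gradient sum to the left-hand side and multiply through by $2/(\beta T)$. The initial-gap term becomes $2(F(\boldsymbol{\theta}_0)-F^\star)/(\beta T)$. The noise, heterogeneity and QoS term becomes $2C_1(\sigma_l^2+\sigma_g^2+\rho_q^2G^2)$, and the movement term becomes $2C_2L^2\frac1T\sum_t\mathbb{E}\|\mathbf{u}_t^\star\|^2$; in both, the factor $\beta$ cancels. The residual term carries no $\beta$ in the lemma, so it becomes $\frac{2C_3}{\beta T}\sum_t R_{\mathrm{SCM},t}^2$. Absorbing the factor $2$ into $C_1,C_2,C_3$ (which remain independent of $T$) gives exactly \eqref{eq:main_convergence_bound}. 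This constant bookkeeping is the only point to check carefully. In particular, the residual must appear with the $1/\beta$ prefactor precisely because Assumption~\ref{assump:bounded_movement_residual} places $C_3R_{\mathrm{SCM},t}^2$ without a $\beta_t$ factor.

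For the simplified rate \eqref{eq:scm_simplified_rate}, insert $R_{\mathrm{SCM},t}\le\varepsilon_{\mathrm{SCM}}$ into the last term. This gives $\frac{C_3}{\beta T}\cdot T\varepsilon_{\mathrm{SCM}}^2=\mathcal{O}(\varepsilon_{\mathrm{SCM}}^2/\beta)$. The movement term is by definition $C_2L^2\overline{U^2}$. By Assumption~\ref{assump:bounded_movement_residual} it is finite and at most $C_2L^2U^2$, so it contributes $\mathcal{O}(L^2\overline{U^2})$. The remaining terms read off directly.

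The main obstacle is therefore not in the theorem itself but in its dependence on Lemma~\ref{lemma:scm_one_step_descent}. Two facts must hold uniformly in $t$: the constants in the lemma are independent of $t$ once $\beta_t=\beta$, and $F^\star$ is a genuine uniform lower bound. If I needed to justify the lemma's constants, I would apply $L$-smoothness in the wrapped chart, justified because $\|\mathbf{u}_t^\star\|\le U$ keeps the step inside a single chart. I would then expand $\langle\nabla F,\mathbf{u}_t^\star\rangle$ using \eqref{eq:scm_residual_error} and \eqref{eq:scm_direction_decomposition}, bound the cross terms with the $\xi$ errors by Young's inequality $\langle a,b\rangle\le\frac14\|a\|^2+\|b\|^2$ together with \eqref{eq:error_bounds}, and control the residual cross term by \eqref{eq:scm_residual_descent_scale}. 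The condition $\beta\le 1/L$ absorbs the quadratic term $\frac{L}{2}\|\mathbf{u}_t^\star\|^2$.
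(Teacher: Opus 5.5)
Your proposal is correct and matches the paper's proof step for step. You sum Lemma~\ref{lemma:scm_one_step_descent} with $\beta_t=\beta$, replace $\mathbb{E}[F(\boldsymbol{\theta}_T)]$ by $F^\star$, multiply by $2/(\beta T)$, absorb the factor $2$ into $C_1,C_2,C_3$, and insert $R_{\mathrm{SCM},t}\le\varepsilon_{\mathrm{SCM}}$ to obtain the $\mathcal{O}(\varepsilon_{\mathrm{SCM}}^2/\beta)$ term. One caution on your optional lemma sketch, which does not affect the theorem: $\beta\le 1/L$ does not by itself fold $\frac{L}{2}\|\mathbf{u}_t^\star\|^2$ into $C_2\beta L^2\|\mathbf{u}_t^\star\|^2$, since that requires $C_2\ge 1/(2\beta L)$, a constant depending on $\beta L$ though still independent of $T$.
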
 

\begin{proof}
The proof follows by applying the smoothness descent lemma to the accepted
SCM-A2G server update, decomposing the update error into stochastic local
optimization error, client heterogeneity error, QoS-induced weighting error,
geometry approximation error, and SCM fixed-point residual error. Summing the
resulting one-step descent inequality over \(t=0,\ldots,T-1\), using the lower
boundedness of \(F\), and rearranging terms yields
\eqref{eq:main_convergence_bound}.
\end{proof}

\subsection{Interpretation}
\label{subsec:convergence_interpretation}

Theorem~\ref{theorem:main_convergence_result} shows that SCM-A2G-QFL retains
the standard non-convex FL convergence structure. The average squared gradient
norm decreases at an \(\mathcal{O}(1/T)\) rate up to a neighbourhood determined
by stochastic local training, client heterogeneity, QoS-induced reweighting,
geometry-induced movement error, and the SCM fixed-point residual. The residual
term $
\frac{1}{\beta T}\sum_{t=0}^{T-1}R_{\mathrm{SCM},t}^2$
is specific to the proposed self-consistent midpoint mechanism. It shows that
the accuracy of the midpoint fixed-point solve directly affects the final
convergence neighbourhood.

The result also explains the empirical stability observed in the experiments.
The geometry gain \(\beta\) controls how aggressively the server moves from the
current global parameter \(\boldsymbol{\theta}_t\) toward the QoS-weighted
geometry-aware direction. Smaller accepted movements can reduce the midpoint
displacement term \(L^2\overline{U^2}\), but excessively small \(\beta\) can
increase the relative effect of the residual contribution
\(\varepsilon_{\mathrm{SCM}}^2/\beta\). Thus, \(\beta\) represents a
stability--progress trade-off rather than a purely stabilizing parameter.

Overall, SCM-A2G does not merely change the aggregation target. It controls the
path by which the global QNN model moves from \(\boldsymbol{\theta}_t\) to
\(\boldsymbol{\theta}_{t+1}\), requiring the accepted movement to remain
consistent with the QoS-weighted direction evaluated at its own midpoint. This
provides a theoretical explanation for the reduced late-round volatility
observed in the experiments.

\section{Experimental Evaluation}
\subsection{Datasets and Domain Generalization}
\label{subsec:datasets_domain_generalization}

We use two binary classification datasets from
substantially different domains.
The Breast-Lesions-USG represents a medical
ultrasound diagnosis task, where patient age and ultrasound descriptors are used
to classify lesions as benign or malignant \cite{pawlowska2024curated}. 
Invalid entries are excluded,
categorical features are label-encoded, age is converted to numeric form, and
principal component analysis (PCA) apply after standardization to obtain \(d=2\) or \(d=5\)
QNN-compatible features.

The Bank Account Fraud (BAF) dataset serves as a benchmark for financial fraud detection.
BAF was introduced at NeurIPS 2022 as a privacy-preserving, large-scale resource.
It provides a realistic tabular benchmark for detecting fraud in bank account openings.
\cite{jesus2022turning}. The suite comprises six synthetic datasets generated
from an anonymized real-world fraud detection environment and addresses practical
challenges, including class imbalance, temporal dynamics, and controlled bias
variants. 
Each dataset includes 32 input features and up to one million records.
The binary target variable is \texttt{fraud\_bool}. In the conducted experiments, missing
values are removed, categorical variables are ordinal encoded, and numerical
features are standardized. PCA is applied to obtain \(d=4\) QNN-compatible
features. 
For both datasets, train, validation, and test splits are performed
prior to encoding, scaling, and PCA to prevent data leakage.

\subsection{Experimental Configuration}
\label{subsec:experimental_configuration}

All experiments are implemented in Python using Qiskit, Qiskit Aer, and Qiskit
Machine Learning. Qiskit Aer is used for simulator-based QNN execution. The main
supporting libraries include NumPy, pandas, scikit-learn, and matplotlib. Fixed
random seeds are used for dataset splitting, client partitioning, QNN parameter
initialization, and stochastic optimizer behaviour wherever supported.
The main experimental configuration is summarized in
Table~\ref{tab:experimental_config}. Dataset-specific adjustments are made when
required by sample size, circuit dimension, and computational budget.

\begin{table}[t]
\centering
\caption{Main experimental configuration used in the domain-generalized A2G and SCM-A2G-QFL experiments.}
\label{tab:experimental_config}
\begin{tabular}{ll}
\hline
\textbf{Setting} & \textbf{Value} \\
\hline
Programming language & Python 3.11 \\
Quantum framework & Qiskit, Qiskit Machine Learning \\
Simulator & Qiskit Aer simulator \\
Feature map & ZZFeatureMap \\
Ansatz & RealAmplitudes \\
Task type & Binary classification \\
Datasets & Breast-Lesions-USG, BAF \\
Domains & Medical ultrasound, financial fraud detection \\
QNN input dimension & \(d=2,4,5\) depending on dataset \\
Number of clients & 5 or 10 \\
Federated settings & IID, label-skewed, quantity-skewed \\
Aggregation baselines & FedAvg, A2G, MP-A2G, SCM-A2G-QFL \\
QoS signal & Quantum fidelity \\
Shots & 1024 \\
\hline
\end{tabular}
\end{table}
 
 \subsection{Controlled Angular Stress-Test Validation}
\label{subsec:controlled_angular_stress_test}

\begin{table*}[h]
\centering
\scriptsize
\caption{Controlled angular stress-test cases. Angles are in degrees.
Euclidean aggregation can move to seam-inconsistent regions in wrap-around
cases, whereas MP-A2G and SCM produce bounded geometry-aware server
movements. \(R_{\mathrm{SCM}}\) denotes the final SCM fixed-point residual. Here, \(\theta_{\mathrm{Euc}}\) and \(\theta_{\mathrm{Circ}}\) denote direct aggregation targets, whereas \(\theta_{\mathrm{MP}}\) and \(\theta_{\mathrm{SCM}}\) denote the next server states obtained after applying bounded geometry-aware movement from the current server parameter \(\theta_t\).}
\label{tab:controlled_stress_summary}
\resizebox{\textwidth}{!}{%
\begin{tabular}{l r l r r r r r}
\toprule
\textbf{Case} &
\(\boldsymbol{\theta_t}\) &
\textbf{Client angles} &
\(\boldsymbol{\theta_{\mathrm{Euc}}}\) &
\(\boldsymbol{\theta_{\mathrm{Circ}}}\) &
\(\boldsymbol{\theta_{\mathrm{MP}}}\) &
\(\boldsymbol{\theta_{\mathrm{SCM}}}\) &
\(\boldsymbol{R_{\mathrm{SCM}}}\) \\
\midrule
A: small cluster & -170.0 & \(5.7,-11.5,2.9\) 
& -0.9667 & -0.9528 & -146.5466 & -146.4140 & \(2.0\times10^{-6}\) \\
B: two-client wrap & -170.0 & \(174.3,-174.3\) 
& 0.0000 & -180.0000 & -171.3875 & -171.3953 & \(0.0\) \\
C: symmetric spread & -170.0 & \(-85.9,0.0,85.9\) 
& 0.0000 & 0.0000 & -163.0625 & -163.0233 & \(1.0\times10^{-6}\) \\
D: three-client wrap & -170.0 & \(168.5,-171.4,177.1\) 
& 58.0667 & 178.0617 & -171.6558 & -171.6651 & \(0.0\) \\
\bottomrule
\end{tabular}%
}
\vspace{1mm}
\begin{flushleft}
\footnotesize
\(\theta_{\mathrm{Euc}}\) denotes the Euclidean mean, \(\theta_{\mathrm{Circ}}\) denotes the circular target, \(\theta_{\mathrm{MP}}\) denotes the midpoint-projected update, and \(\theta_{\mathrm{SCM}}\) denotes the proposed self-consistent midpoint update. \(R_{\mathrm{SCM}}\) is the SCM fixed-point residual.
\end{flushleft}
\end{table*}

\begin{figure}[t]
    \centering
    \includegraphics[width=\linewidth]{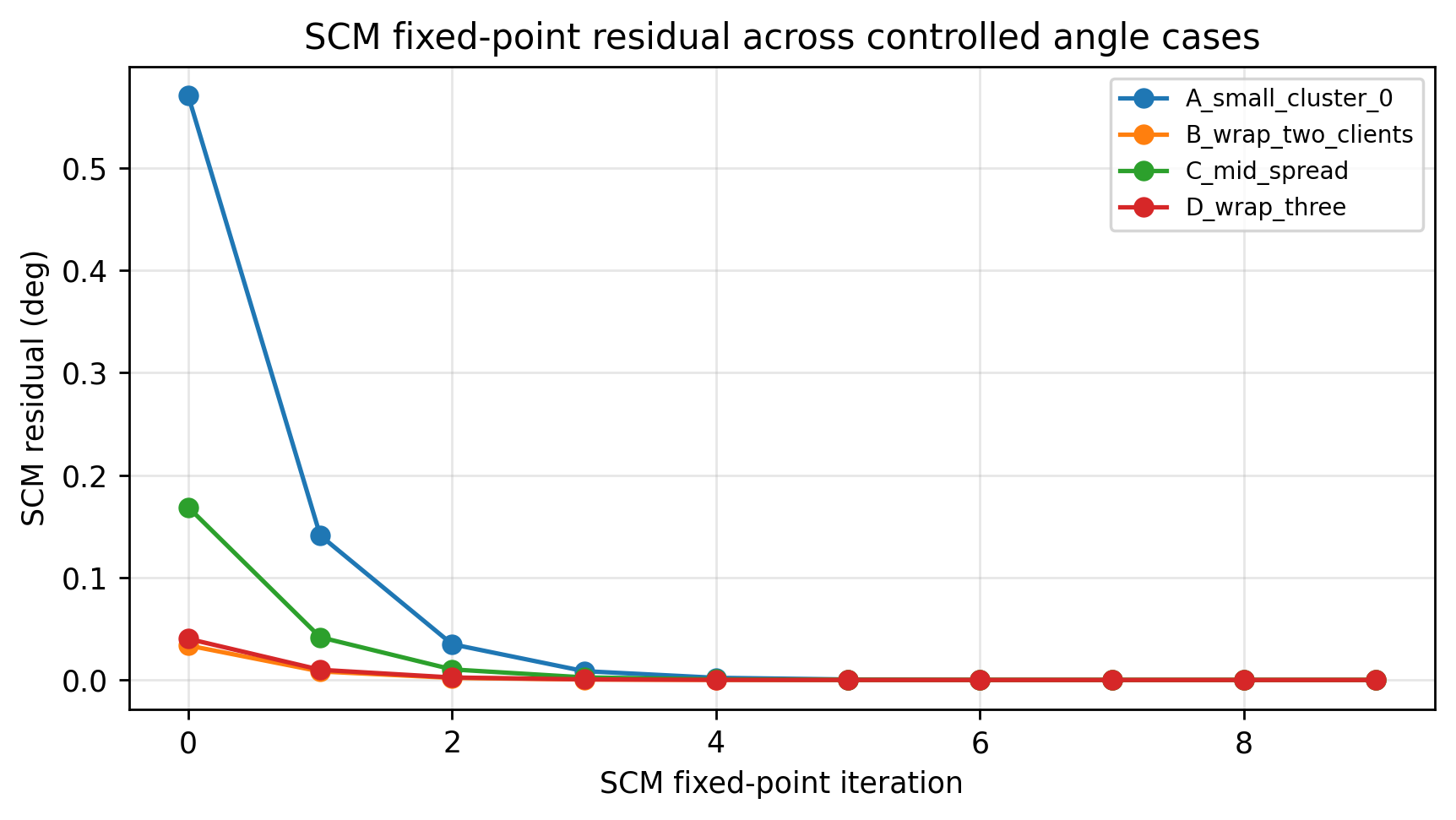}
    \caption{SCM fixed-point residual across controlled angular stress-test cases.
The residual decreases rapidly across SCM iterations, showing that the accepted
server movement becomes self-consistent with its own midpoint. The convergence
remains stable from simple non-wrap cases to more challenging wrap-around
configurations. These resutls are generated in Real Qiskit IBM. Appendix provide more results.}
    \label{fig:scm_residual_controlled_cases}
\end{figure}

Prior to evaluating comprehensive federated QNN training, the angular component is first isolated.
This is achieved by examining aggregation behavior through controlled one-dimensional stress tests. 
These scenarios
represent coordinate-wise QNN rotation parameters on the torus and enable direct assessment of
whether the server update remains geometrically consistent near
angular seams i.e., the wrap-around boundary where \(+\pi\) and \(-\pi\) denote adjacent directions..

Table~\ref{tab:controlled_stress_summary} reports four controlled angular
stress tests. Case A is a non-wrap control case, where the client angles are
clustered away from the \(-180^\circ/+180^\circ\) seam. Euclidean and circular
aggregation therefore produce nearly identical targets, showing that the
geometry-aware formulation does not distort benign angular configurations.

Cases B and D test wrap-around behaviour, where the client angles are
geometrically concentrated near the seam. In these cases, Euclidean averaging
produces seam-inconsistent targets, such as \(0^\circ\) and \(58.07^\circ\),
while circular aggregation recovers the correct seam-consistent target. In
contrast, MP-A2G and SCM-A2G do not directly jump to this target; they compute
bounded server movements from the current global angle. Thus, values near
\(-171^\circ\) in Cases B and D reflect controlled relaxation rather than
aggregation error.

This distinction is central: circular aggregation identifies the
geometry-aware client target, whereas SCM-A2G determines the accepted next
server state. The SCM update is constrained by the geometry gain and midpoint
self-consistency, requiring the accepted movement to remain aligned with the
QoS-weighted direction at its own midpoint. Fig.~\ref{fig:scm_residual_controlled_cases}
confirms this fixed-point behaviour, as the SCM residual rapidly decreases to
near zero in all cases. These tests therefore show that SCM-A2G preserves
benign-case behaviour, avoids seam-induced Euclidean failures, and stabilizes
the server update through midpoint fixed-point refinement.

\subsection{Results}

\subsubsection{Accuracy–stability behaviour}

Fig.~\ref{fig:accuracy_stability} compares the accuracy--stability behaviour
of the methods on Breast-Lesions-USG and BAF. Figures~(a) and~(c) show the
last-5-round mean accuracy, while (b) and~(d) show the corresponding
late-round volatility. On Breast-Lesions-USG, SCM-A2G remains competitive in
late-round accuracy and achieves the lowest volatility, indicating a more
stable global trajectory near convergence. Although FedAvg and MP-A2G can
attain comparable accuracy in some runs, their late-round behaviour is less
stable. On BAF, SCM-A2G provides the strongest result, achieving both the
highest late-round mean accuracy and the lowest volatility. Overall, these
results show that midpoint self-consistency improves the accuracy--stability
trade-off, especially in the BAF setting.
\begin{figure}[htb]
    \centering

    \begin{subfigure}[t]{0.24\textwidth}
        \centering
        \includegraphics[width=\linewidth]{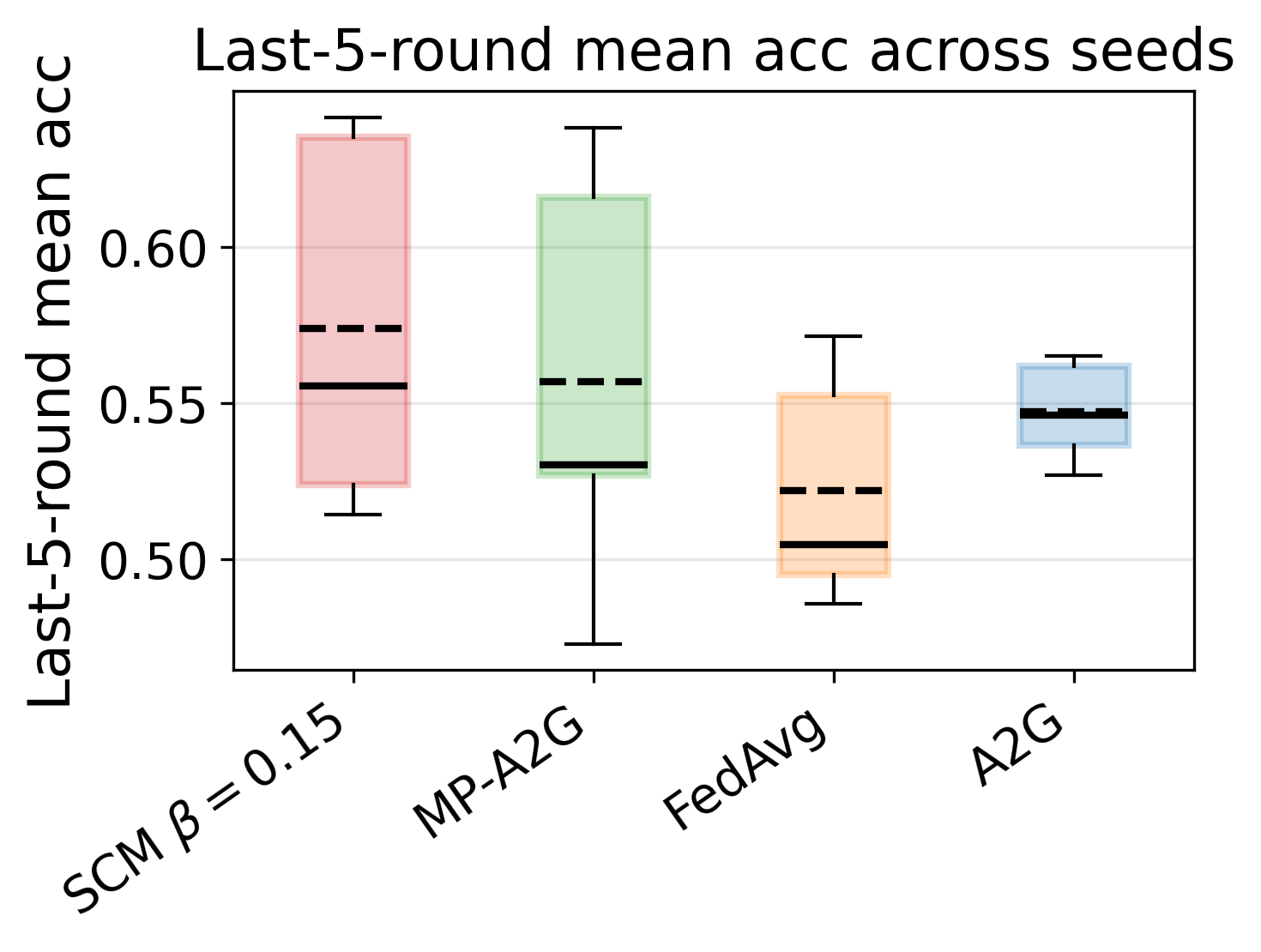}
        \caption{Breast-Lesions-USG}
        \label{fig:breast_last5_mean_accuracy}
    \end{subfigure}
    \hfill
    \begin{subfigure}[t]{0.24\textwidth}
        \centering
        \includegraphics[width=\linewidth]{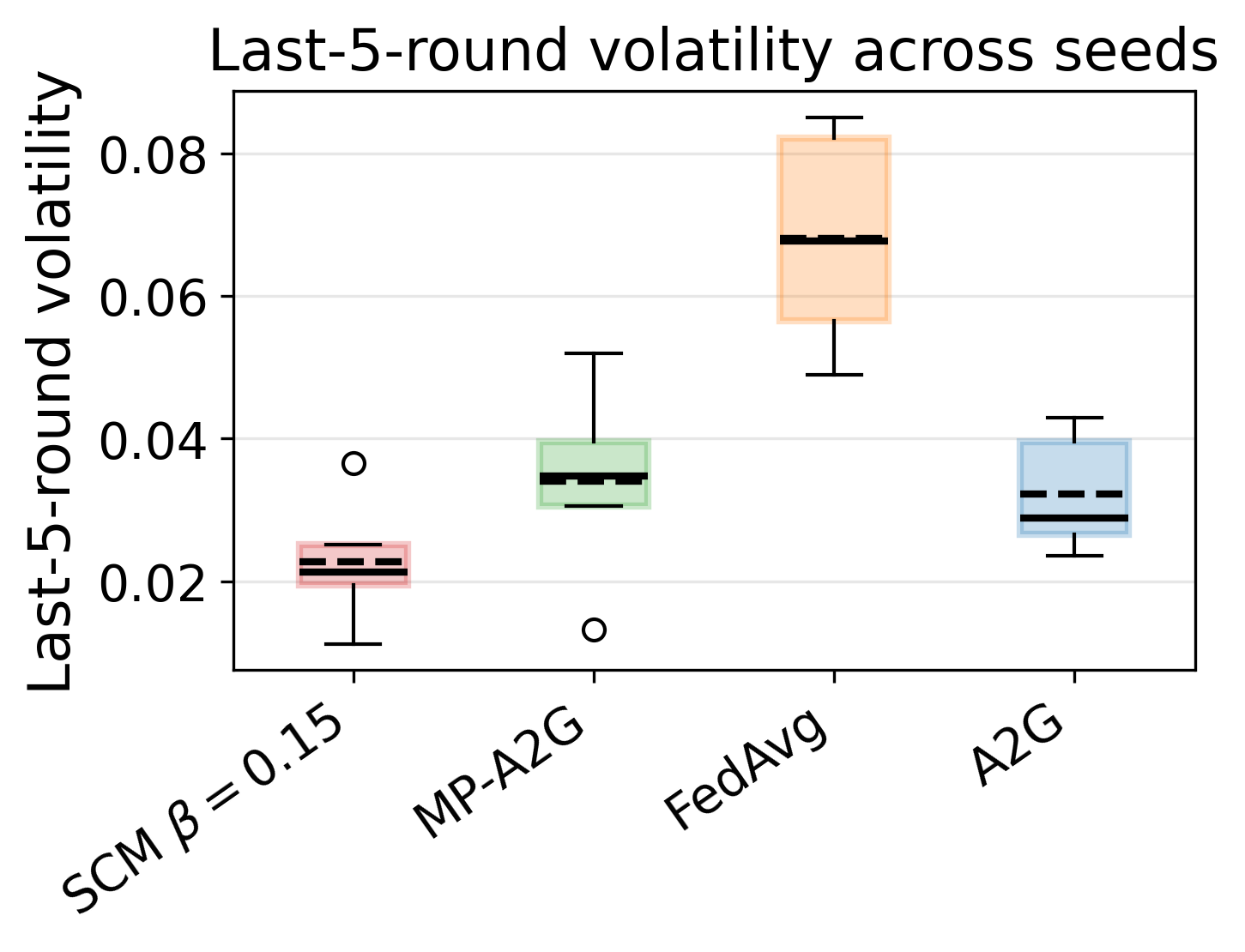}
        \caption{Breast-Lesions-USG}
        \label{fig:breast_last5_volatility}
    \end{subfigure}

    \hfill
    \begin{subfigure}[t]{0.24\textwidth}
        \centering
        \includegraphics[width=\linewidth]{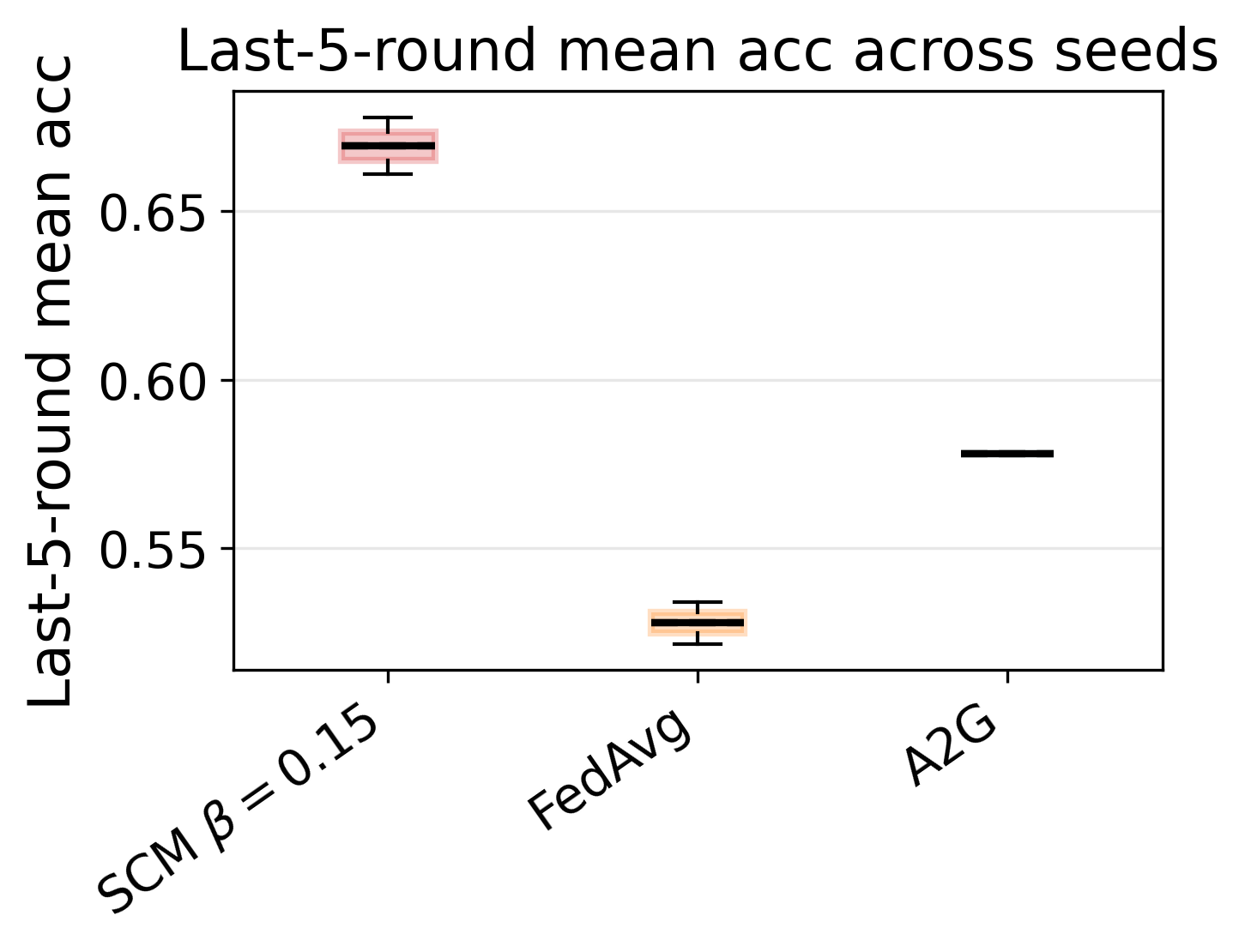}
        \caption{BAF}
        \label{fig:baf_last5_mean_accuracy}
    \end{subfigure}
    \hfill
    \begin{subfigure}[t]{0.24\textwidth}
        \centering
        \includegraphics[width=\linewidth]{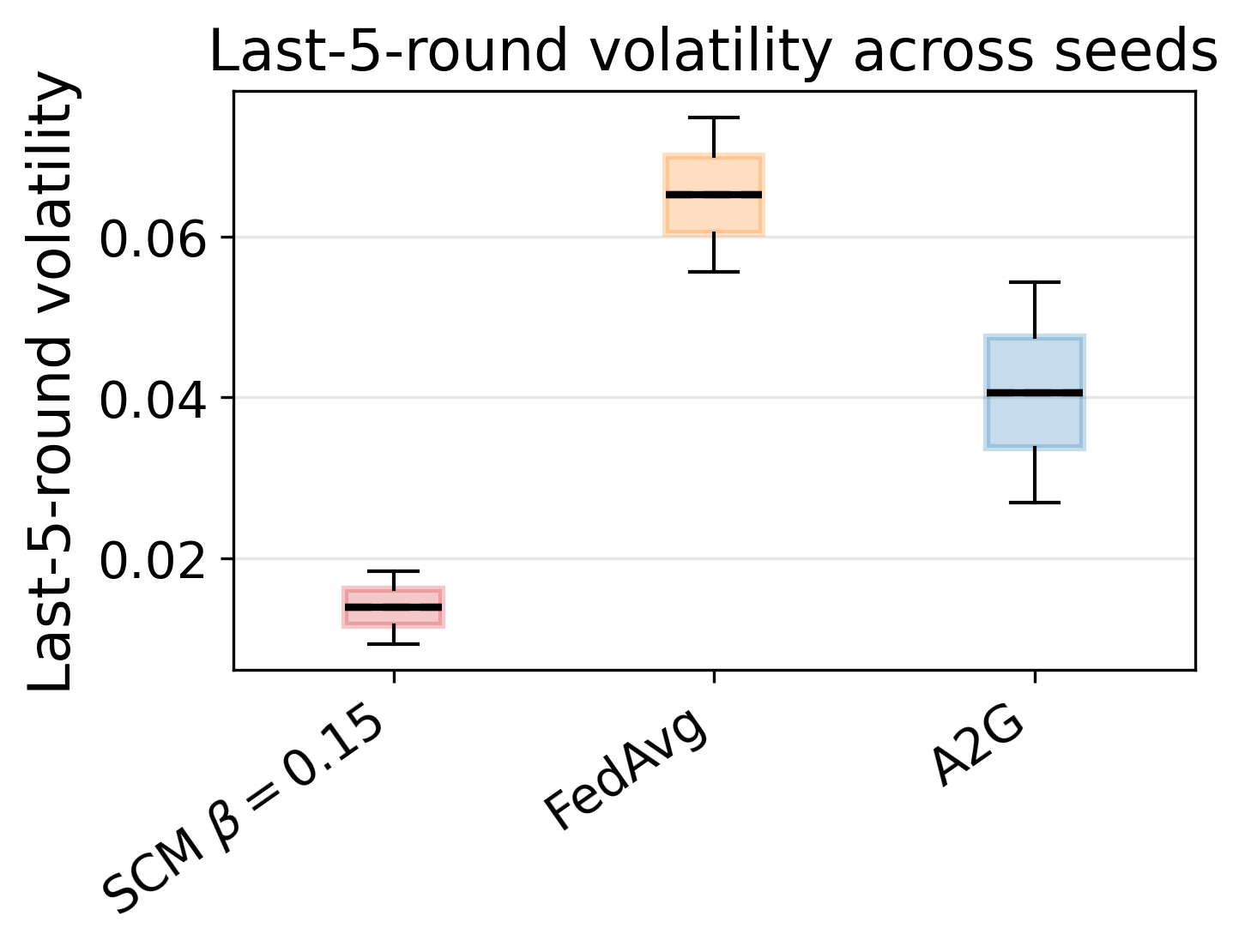}
        \caption{BAF}
        \label{fig:baf_breast_last5_volatility}
    \end{subfigure}

    \caption{Accuracy--stability behaviour on Breast-Lesions-USG and BAF. (a) and~(c) compare primary performance across seeds, while (b) and~(d) report last-5-round volatility. SCM-A2G achieves the
most favourable accuracy--stability trade-off: it remains competitive on
Breast-Lesions-USG and obtains stronger performance with lower volatility on
BAF.}
    \label{fig:accuracy_stability}
\end{figure}

\subsubsection{Performance and movement-control analysis}
\begin{figure}[htb]
    \centering
    \begin{subfigure}[t]{0.24\textwidth}
        \centering      \includegraphics[width=\linewidth]{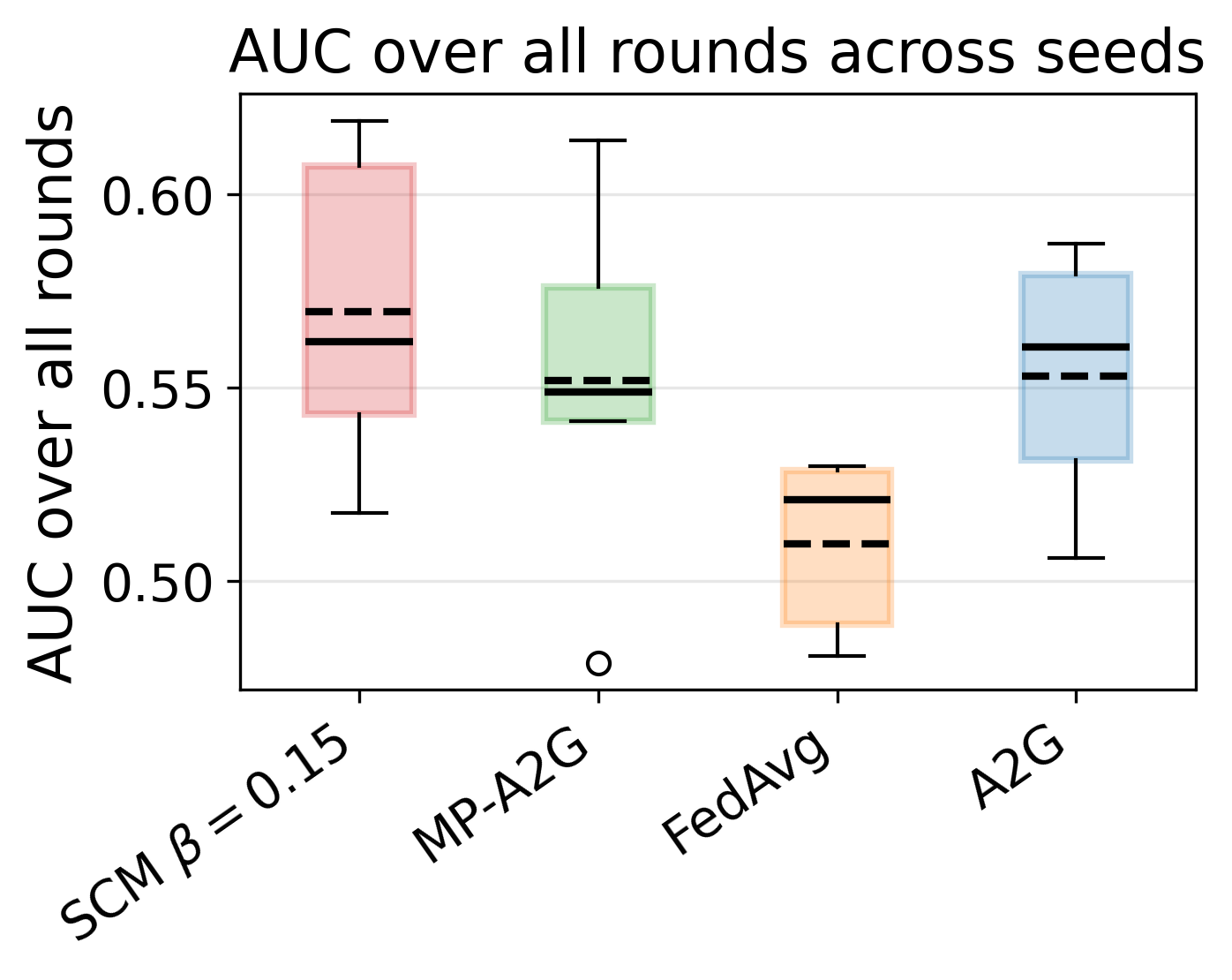}
        \caption{Breast-Lesions}
        \label{fig:breast_primary_perf_box}
    \end{subfigure}
    \hfill
    \begin{subfigure}[t]{0.24\textwidth}
        \centering
        \includegraphics[width=\linewidth]{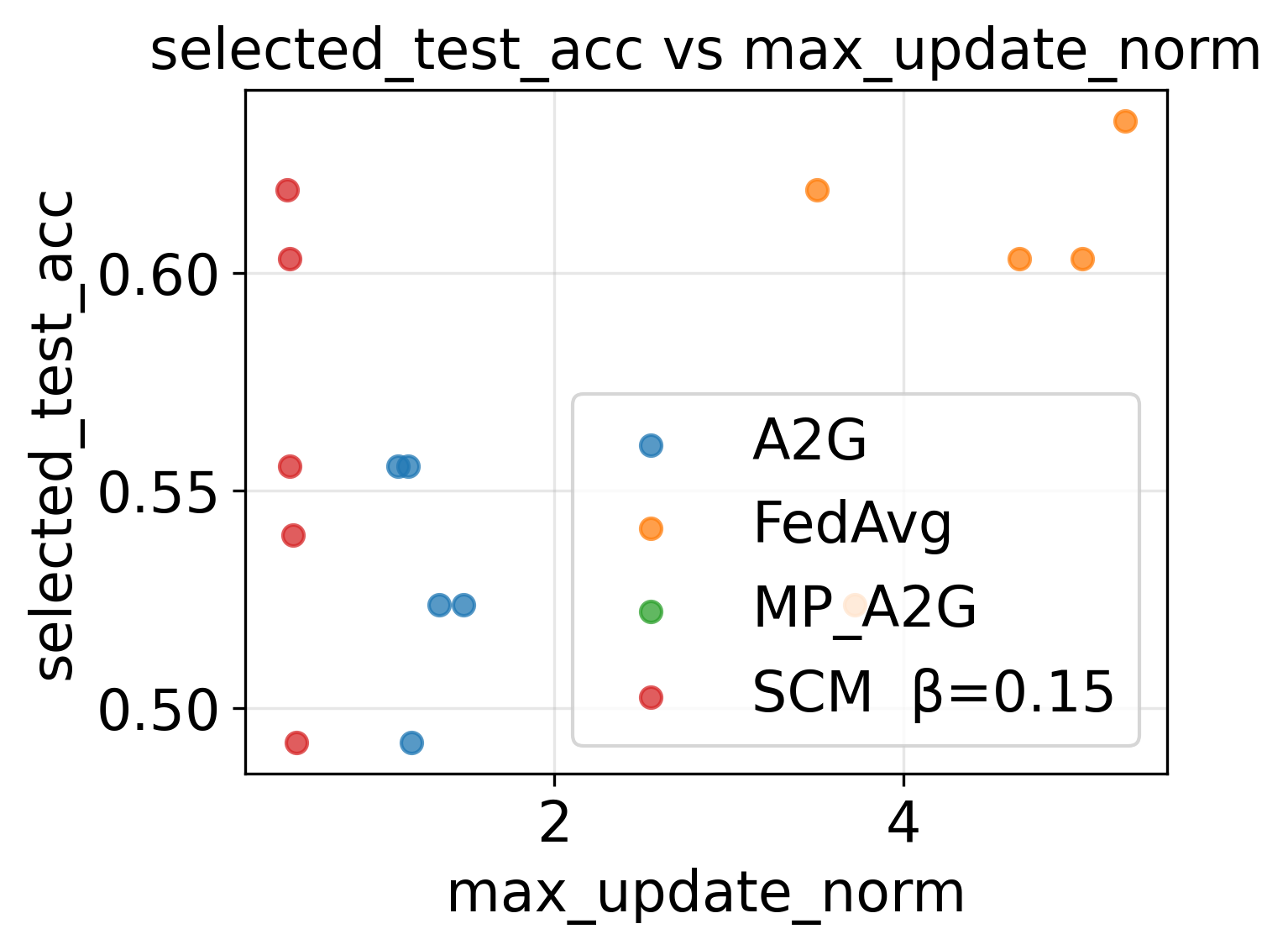}
        \caption{Breast-Lesions}
        \label{fig:breast_perf_update_scatter}
    \end{subfigure}

    \hfill\begin{subfigure}[t]{0.24\textwidth}
        \centering
        \includegraphics[width=\linewidth]{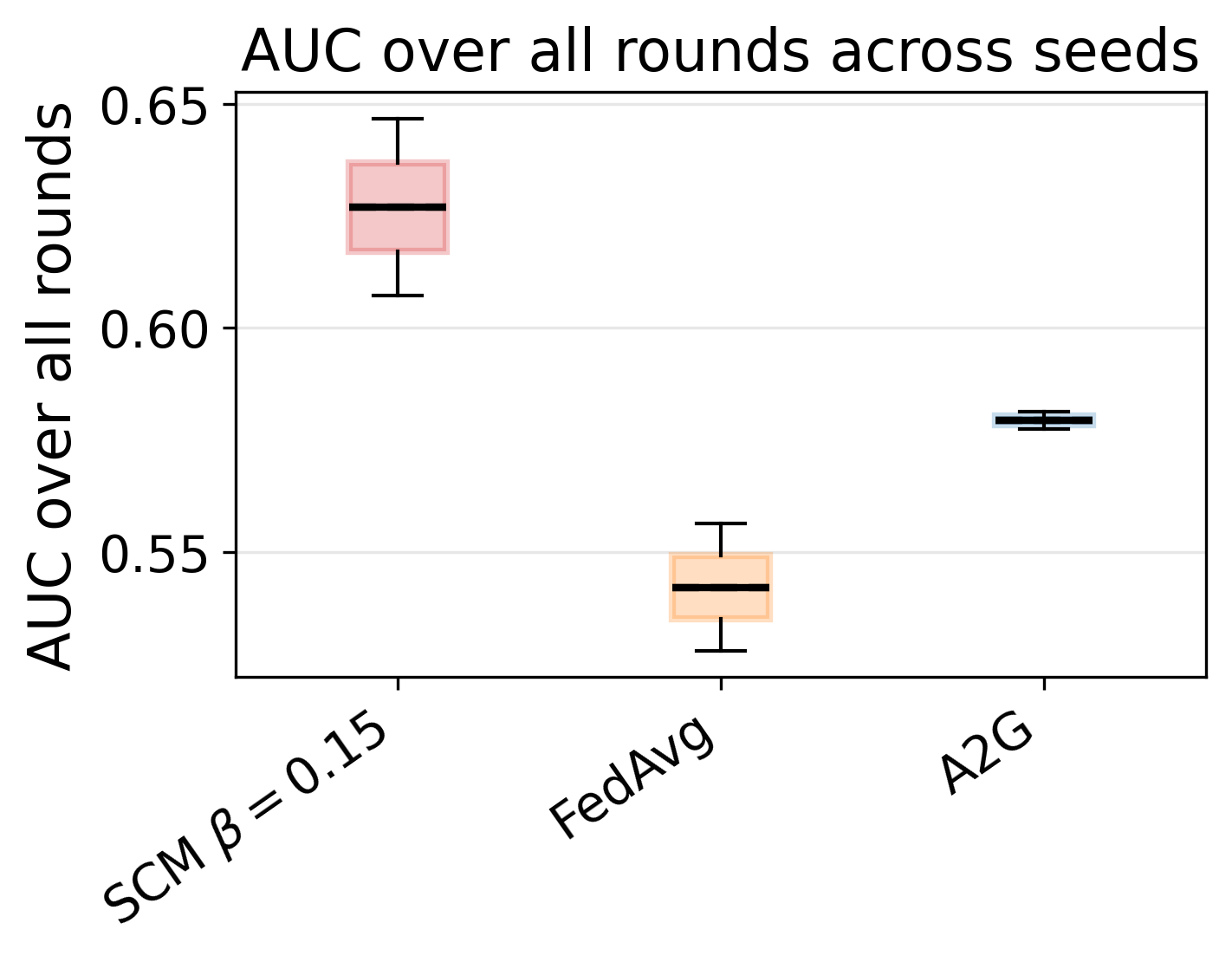}
        \caption{Baf Data AUC}
        \label{fig:baf_box_auc_all}
    \end{subfigure}
    \hfill
    \begin{subfigure}[t]{0.24\textwidth}
        \centering
        \includegraphics[width=\linewidth]{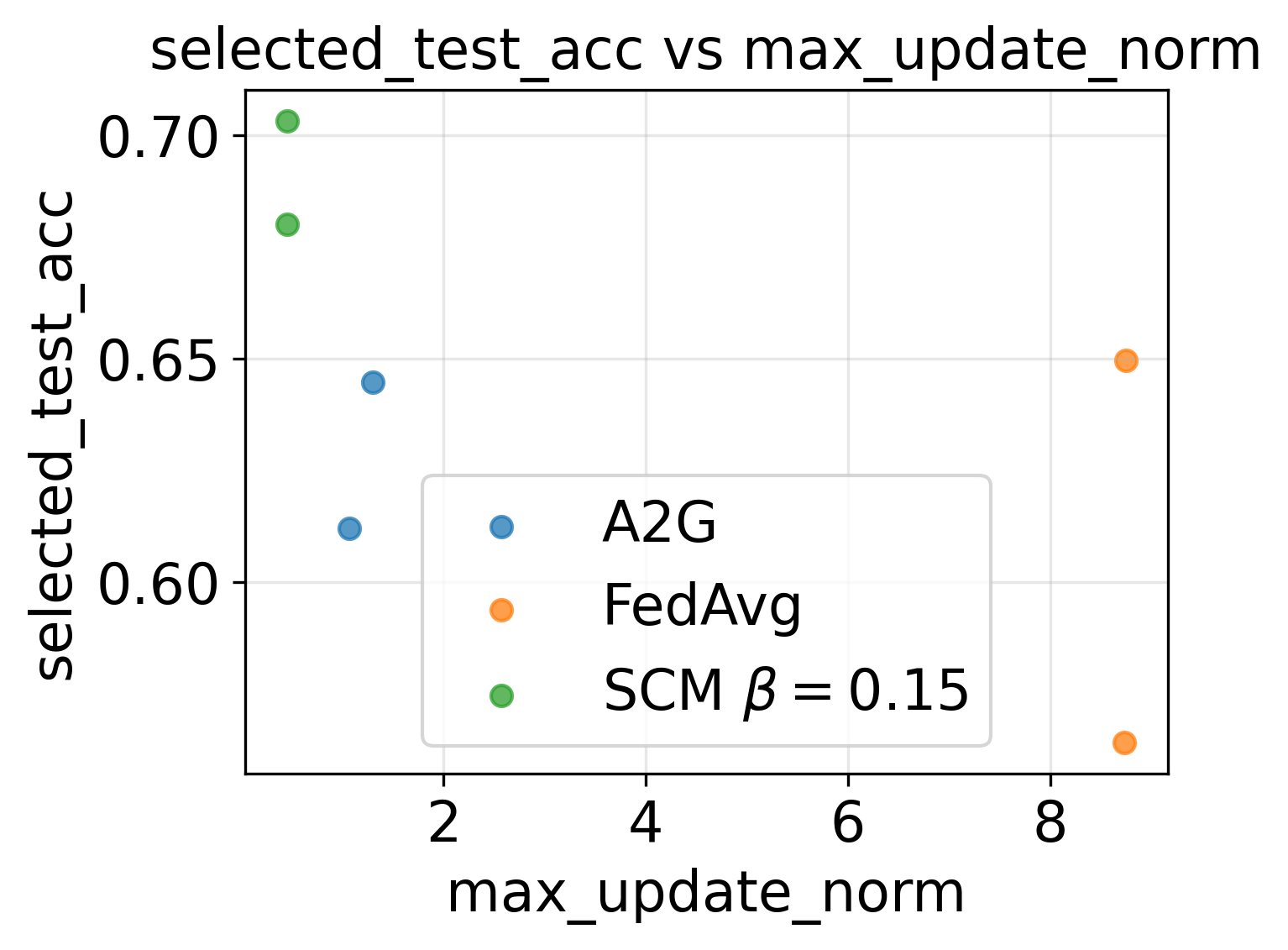}
        \caption{Baf update norm.}
        \label{fig:baf_perf_update_scatter}
    \end{subfigure}
    \caption{Performance and movement-control analysis on Breast-Lesions-USG and
BAF. Subfigures~(a) and~(c) report AUC over all federated rounds, while
Subfigures~(b) and~(d) relate selected test accuracy to the maximum server
update norm. SCM-A2G achieves competitive or superior trajectory-level
performance while requiring substantially smaller server movements than
FedAvg, supporting the implicit damping effect of self-consistent midpoint
aggregation.}
    \label{fig:performance_movement}
\end{figure}

Figure~\ref{fig:performance_movement} shows how predictive behavior relates to movement.
control. On the Breast-Lesions-USG dataset, SCM-A2G achieves competitive trajectory-level
AUC while keeping the maximum server update norm much lower than
FedAvg. FedAvg sometimes reaches high selected accuracy, but this is
linked to aggressive update magnitudes, which suggests a less controlled
aggregation path. A similar pattern appears on the BAF dataset, where SCM-A2G achieves
the highest AUC while keeping the maximum update norm much smaller than
FedAvg. These results support the main idea behind SCM-A2G: the method does
not just optimize for endpoint accuracy, but also controls the accepted server
movement by using midpoint self-consistency.
%Figure 07

\subsubsection{Predictive and validation behaviour}

\begin{figure}[htb]
    \centering

    \begin{subfigure}[t]{0.24\textwidth}
        \centering
        \includegraphics[width=\linewidth]{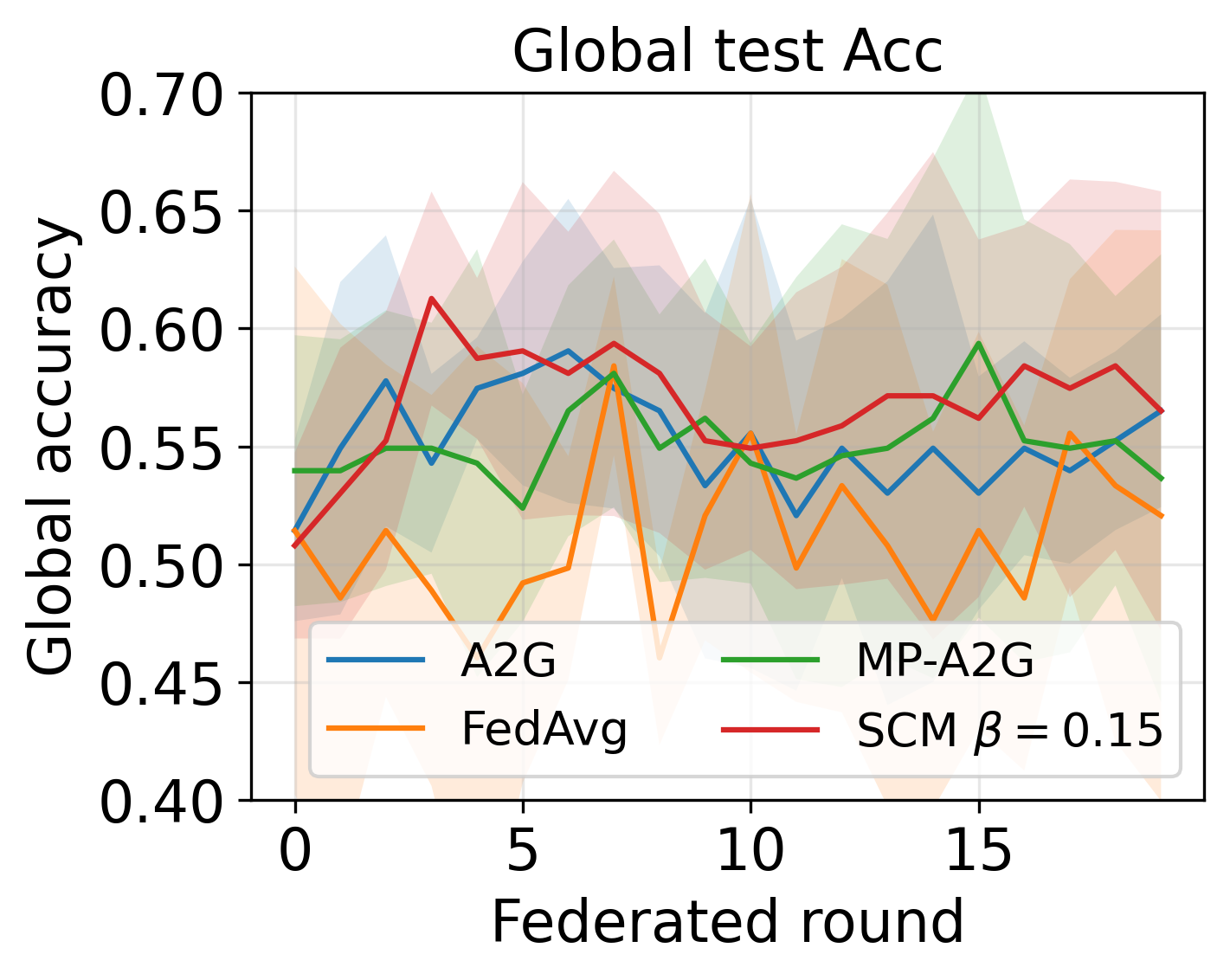}
        \caption{Breast-Lesion}
        \label{fig:breast_global_accuracy}
    \end{subfigure}
    \hfill
    \begin{subfigure}[t]{0.24\textwidth}
        \centering
        \includegraphics[width=\linewidth]{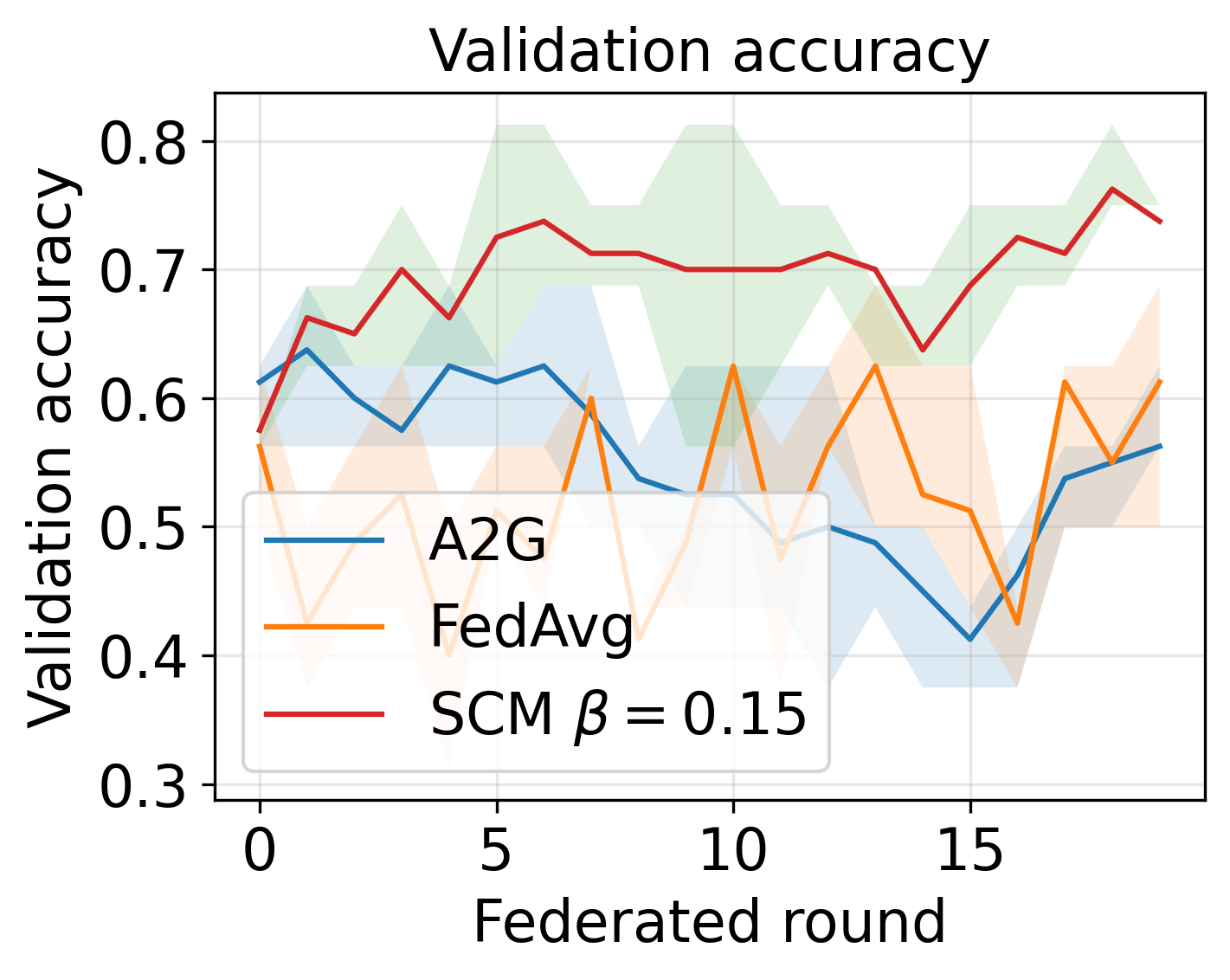}
        \caption{Breast-Lesion Val Acc}
        \label{fig:breast_validation_accuracy}
    \end{subfigure}

\hfill

    \begin{subfigure}[t]{0.24\textwidth}
        \centering
        \includegraphics[width=\linewidth]{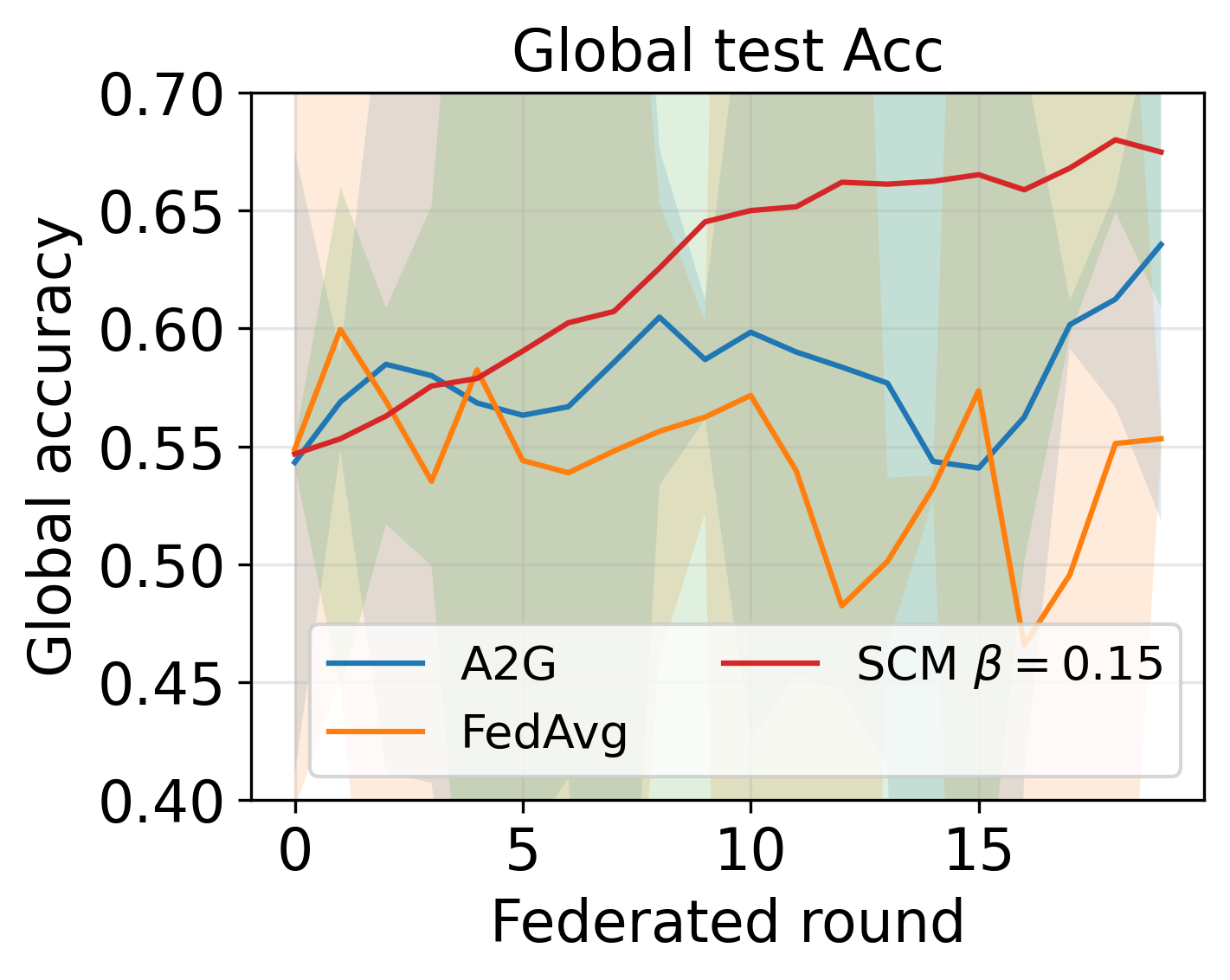}
        \caption{BAF}
        \label{fig:baf_plot_accuracy_mean_ci}
    \end{subfigure}
    \hfill
    \begin{subfigure}[t]{0.24\textwidth}
        \centering
        \includegraphics[width=\linewidth]{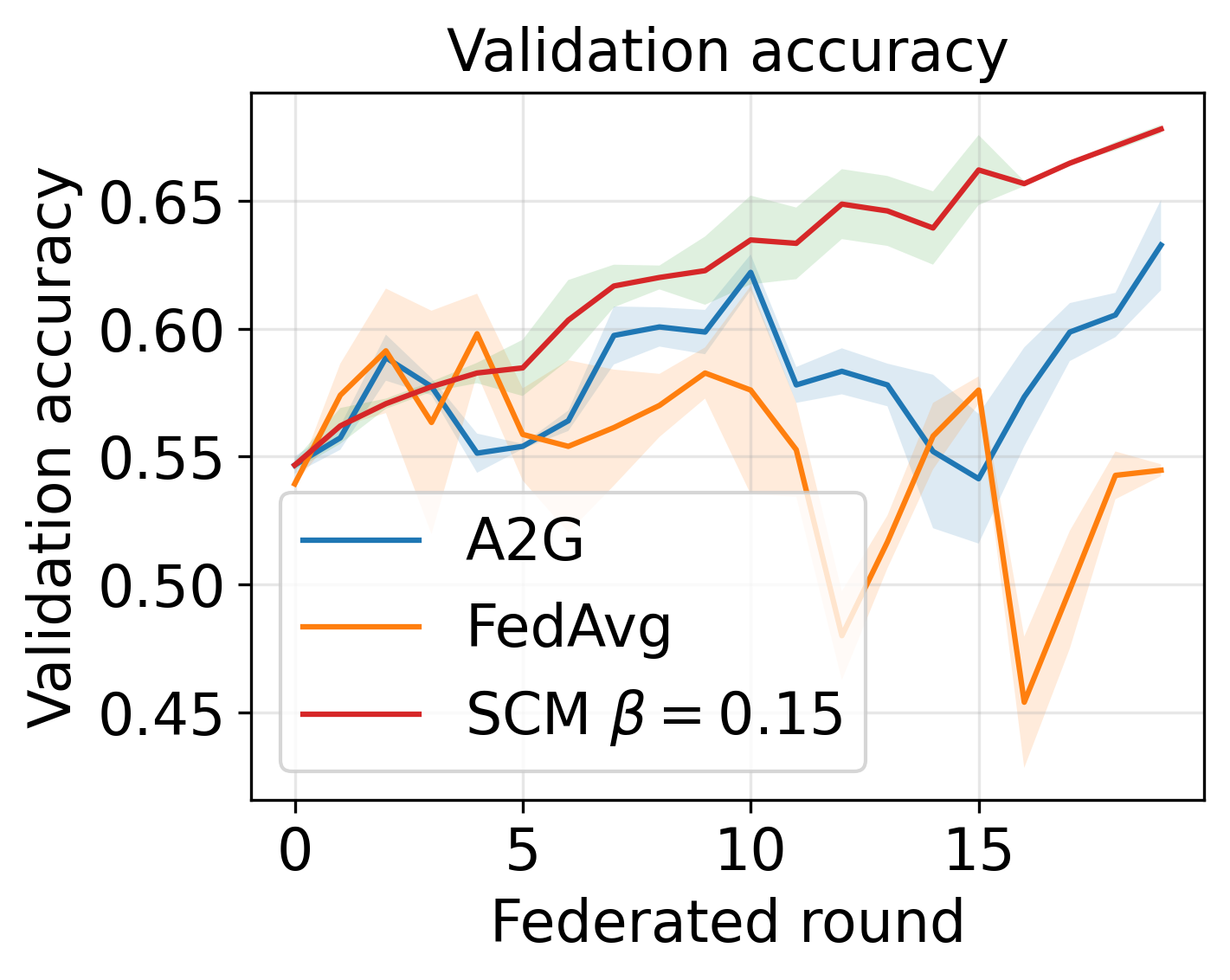}
        \caption{BAF Val Acc}
        \label{fig:baf_plot_validation_accuracy}
    \end{subfigure}
    \caption{Predictive and validation behaviour on Breast-Lesions-USG and BAF.
Subfigures~(a) and~(b) correspond to Breast-Lesions-USG, while
Subfigures~(c) and~(d) correspond to BAF. SCM-A2G maintains competitive global
test accuracy and shows stronger validation behaviour, indicating improved
generalization under heterogeneous federated settings.}
    \label{fig:predictive_validation}
\end{figure}

\begin{figure}[htb]
    \centering

    \begin{subfigure}[t]{0.24\textwidth}
        \centering
        \includegraphics[width=\linewidth]{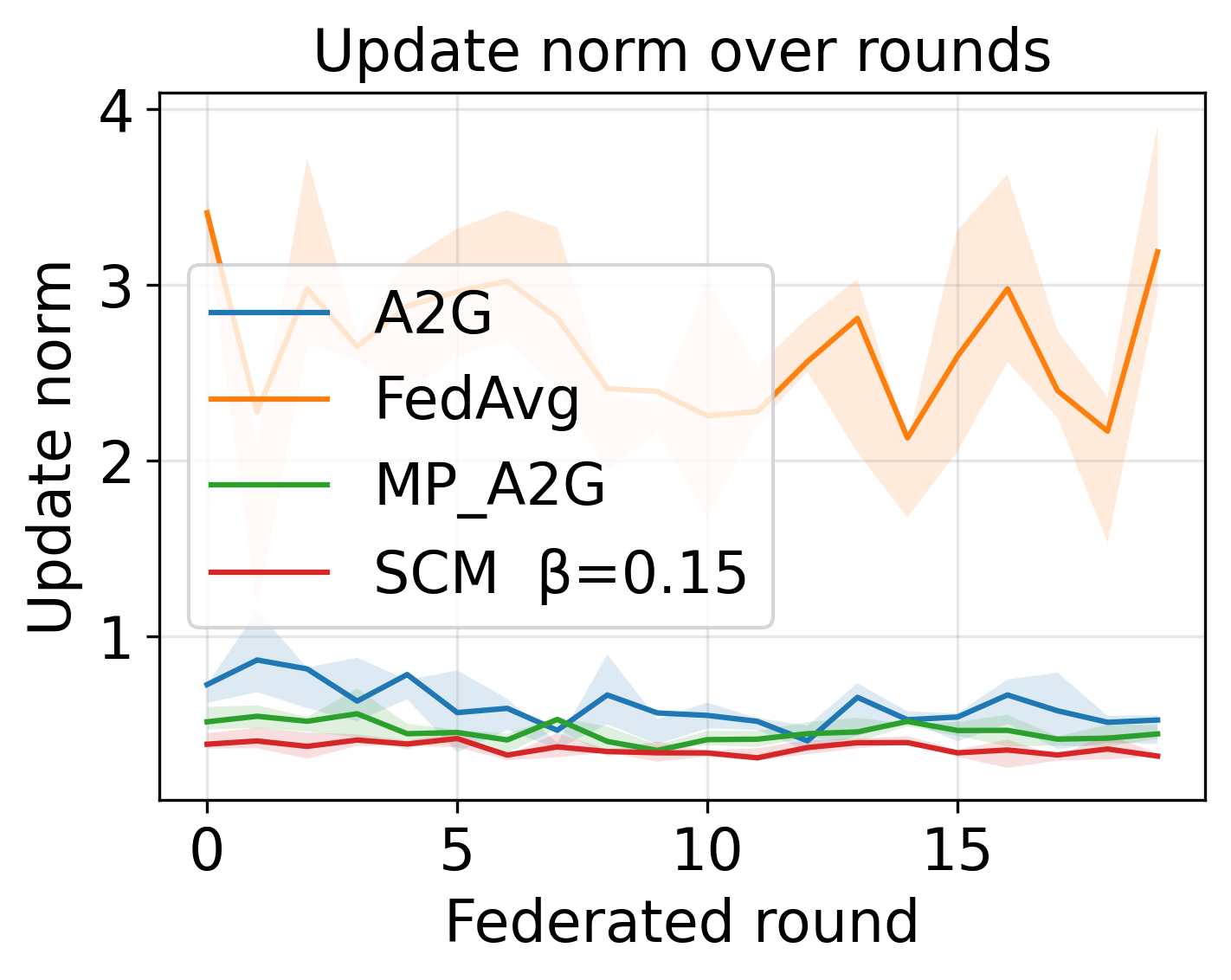}
        \caption{Server update norm.}
        \label{fig:breast_update_norm}
    \end{subfigure}
    \hfill
    \begin{subfigure}[t]{0.24\textwidth}
        \centering
        \includegraphics[width=\linewidth]{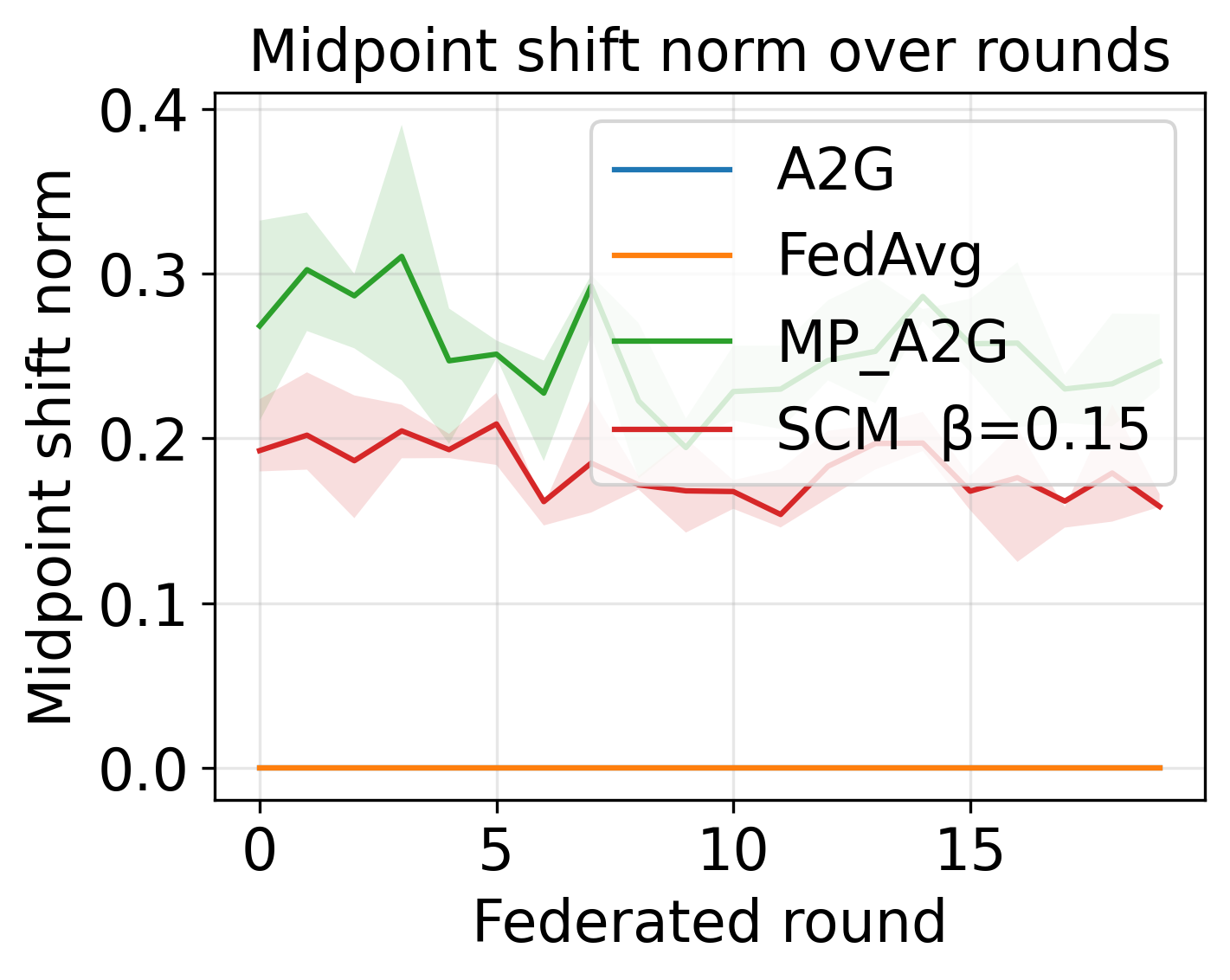}
        \caption{Midpoint shift norm.}
        \label{fig:breast_midpoint_shift}
    \end{subfigure}
    \hfill\begin{subfigure}[t]{0.24\textwidth}
        \centering
        \includegraphics[width=\linewidth]{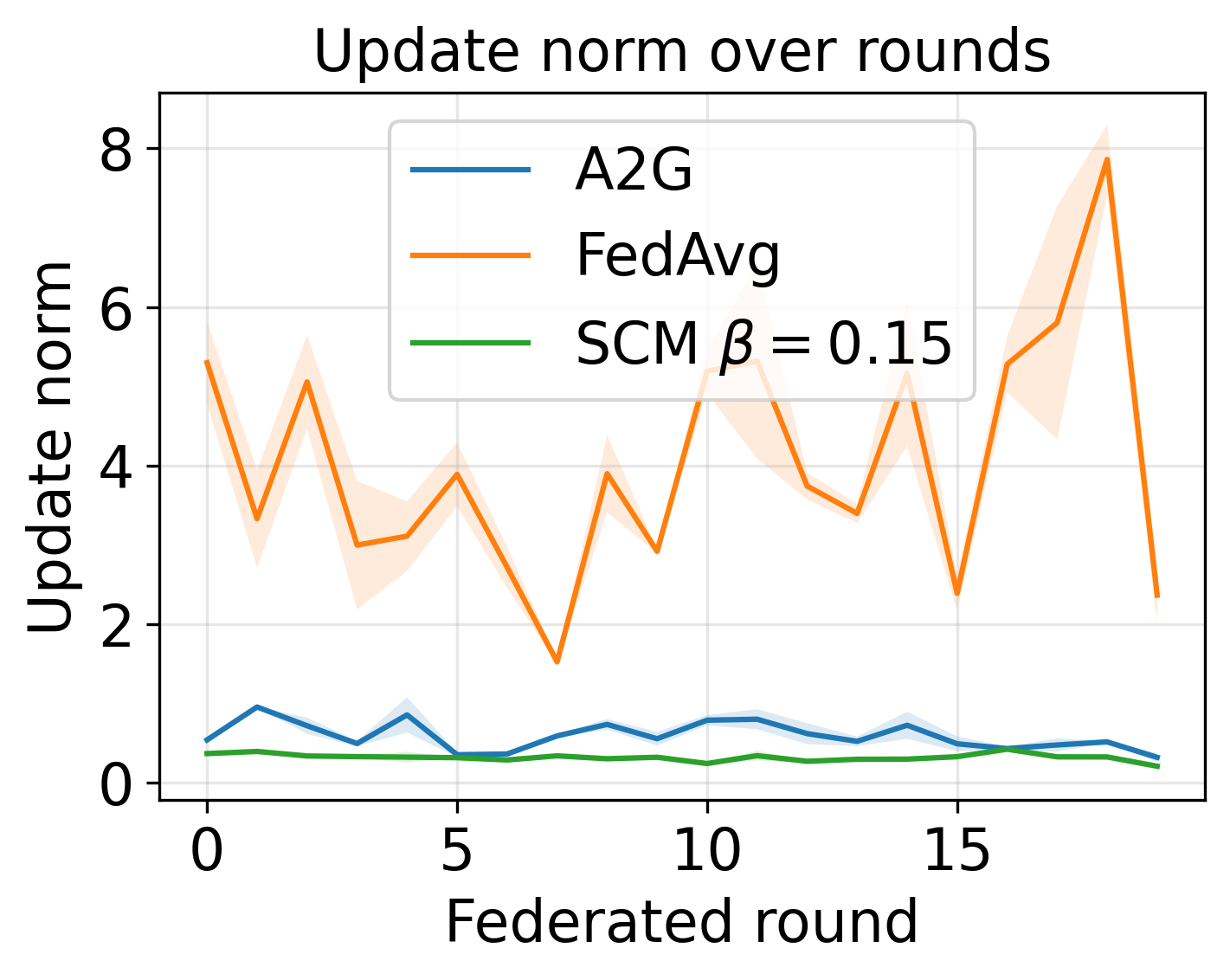}
        \caption{Server update norm.}
        \label{fig:baf_plot_diag_update_norm}
    \end{subfigure}
    \hfill
    \begin{subfigure}[t]{0.24\textwidth}
        \centering
        \includegraphics[width=\linewidth]{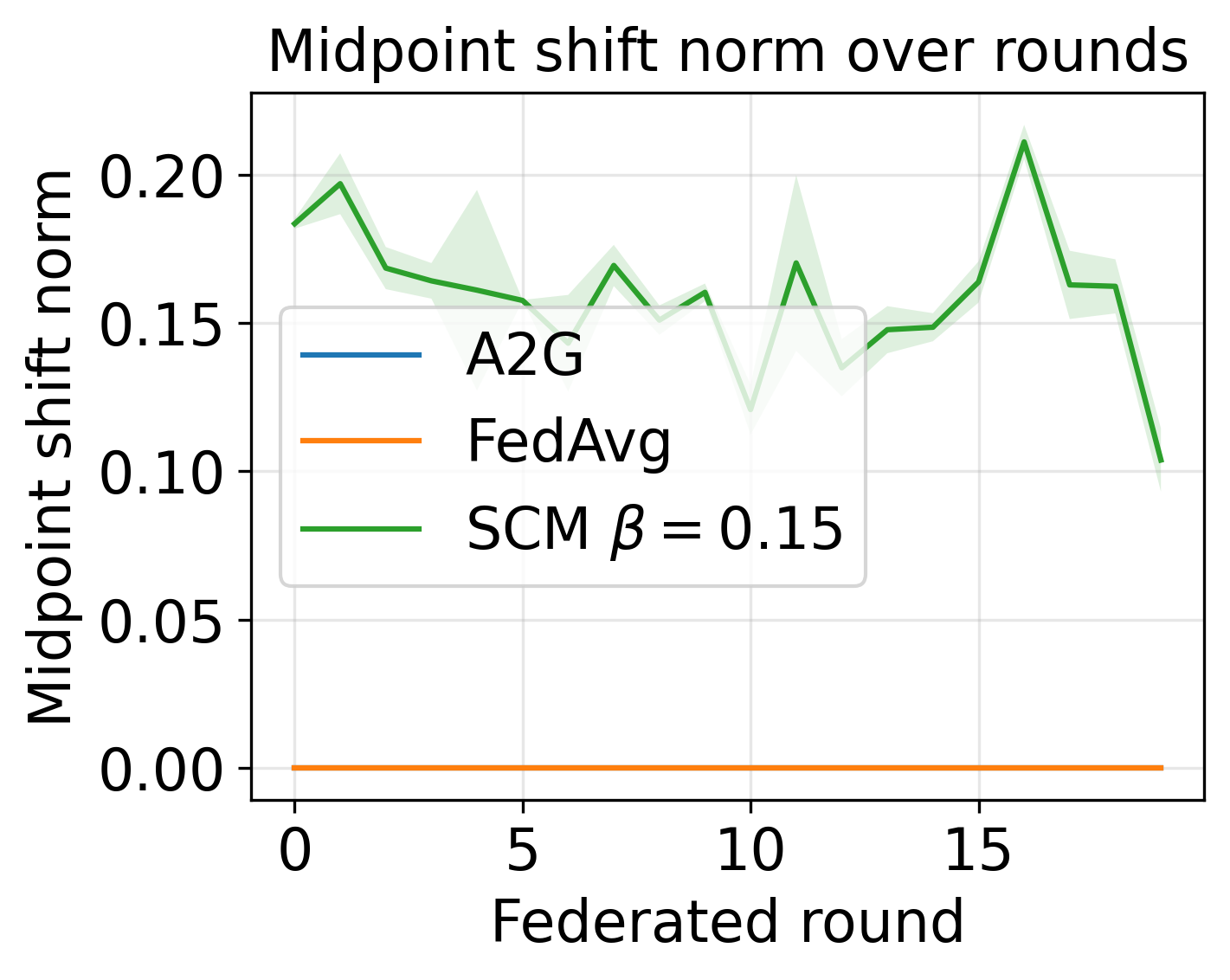}
        \caption{Midpoint shift norm.}
   \label{fig:baf_plot_diag_midpoint_shift_norm}
    \end{subfigure}

    \caption{Aggregation-path diagnostics on the Breast-Lesions-USG dataset and BAF dataset.
    }
    \label{fig:aggregation_diagnostics}
\end{figure}
Figure~\ref{fig:predictive_validation} shows the test and validation accuracy for each round.
The test curves for Breast-Lesions-USG are quite noisy, which reflects the
stochastic nature of quantum neural network (QNN) training and the presence of non-independent and identically distributed (non-IID) clinical partitions. Nevertheless,
the validation curve demonstrates that SCM-A2G maintains stronger generalization
behaviour over most rounds. For the BAF dataset, the trend is more pronounced: SCM-A2G improves
steadily and achieves the highest validation accuracy by the final rounds.
Collectively, these results indicate that midpoint self-consistency does not merely
reduce update magnitudes; it also contributes to the development of global models with improved
validation performance across domains.

\subsubsection{Aggregation-path diagnostics}

Fig.~\ref{fig:aggregation_diagnostics} explains the mechanism behind the
observed stability. In both Breast-Lesions-USG and BAF, FedAvg produces the
largest and most volatile server update norms, indicating aggressive movement
of the global model. A2G and MP-A2G reduce this effect, but SCM-A2G produces the
most controlled update trajectory. The midpoint-shift plots further distinguish
SCM-A2G from MP-A2G: while MP-A2G applies a one-shot midpoint correction,
SCM-A2G iteratively recomputes midpoint-supported directions and accepts the
movement only after self-consistency is reached. The lower midpoint-shift norm
therefore provides direct empirical evidence that SCM-A2G stabilizes the
accepted server movement.

Overall, the results show that SCM-A2G is not simply an accuracy-improving
heuristic. Its main advantage is the accuracy--stability--movement-control
trade-off. FedAvg can occasionally achieve high accuracy, but it does so with
large and unstable update movements. A2G improves geometry awareness, and
MP-A2G introduces a one-shot midpoint correction, but SCM-A2G further stabilizes
the server trajectory by requiring the accepted movement to be supported by its
own midpoint. This leads to lower late-round volatility, smaller update norms,
and stronger validation behaviour, especially on the BAF dataset. The
consistent behaviour across Breast-Lesions-USG and BAF supports the claim that
SCM-A2G is domain-general rather than tuned to a single application.

\begin{figure}[h]
    \centering
    \begin{subfigure}[t]{0.24\textwidth}
        \centering
        \includegraphics[width=\linewidth]{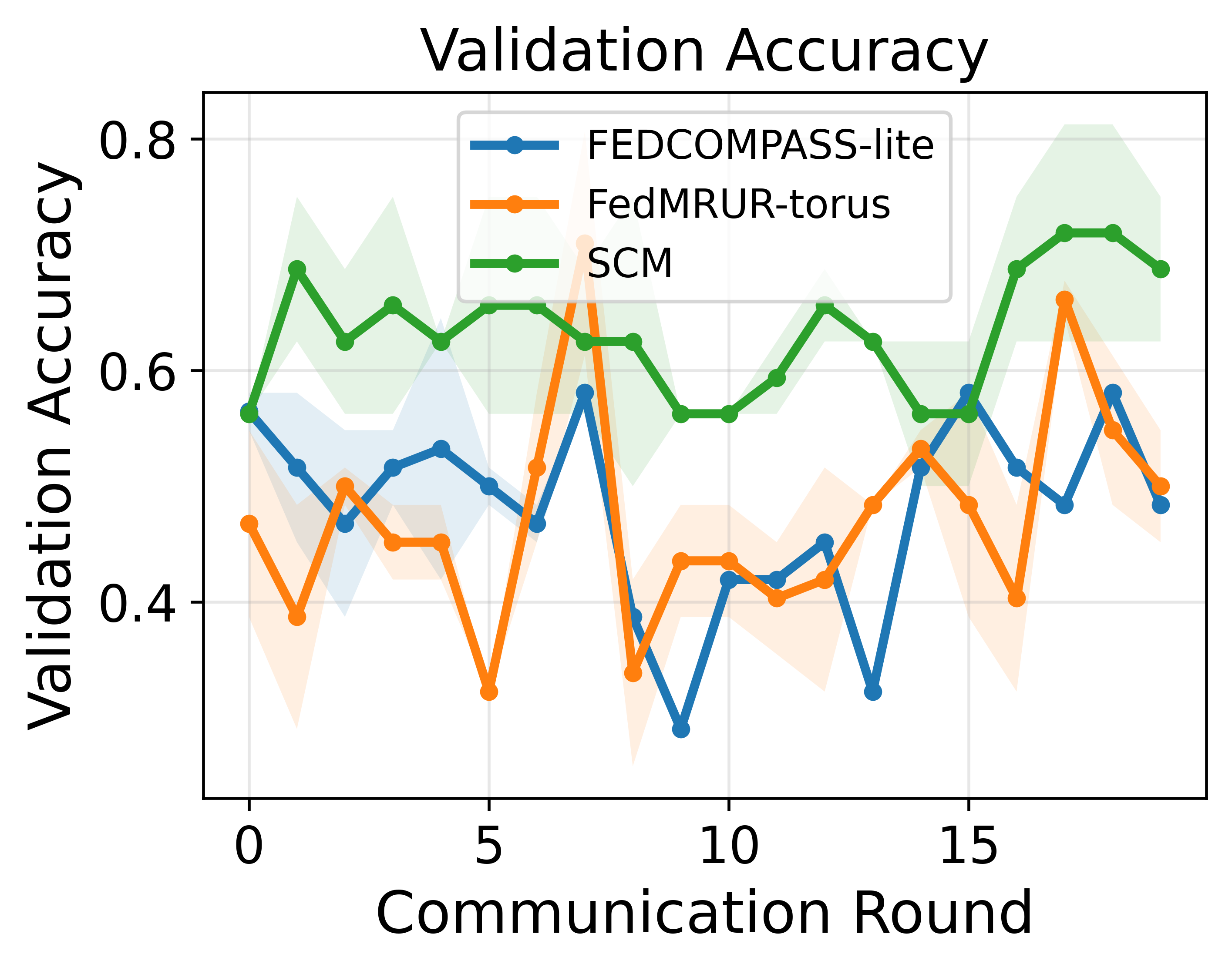}
        \caption{}
        \label{fig:comparativeVal_acc}
    \end{subfigure}
    \hfill
    \begin{subfigure}[t]{0.24\textwidth}
        \centering
        \includegraphics[width=\linewidth]{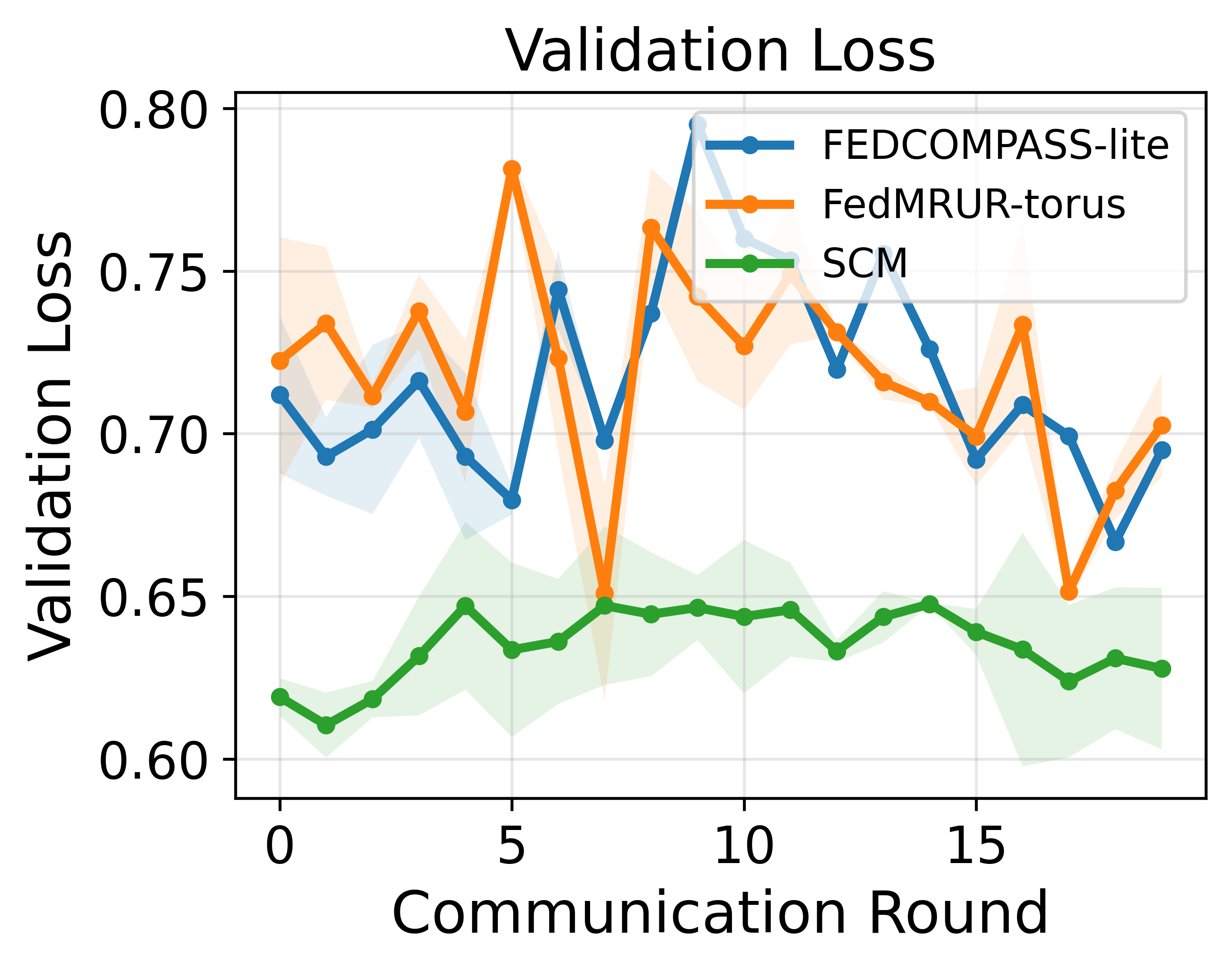}
        \caption{}
        \label{fig:comparative_val_loss}
    \end{subfigure}
    \caption{Comparative validation performance of SCM-A2G against geometry-aware
baselines on the Breast-Lesions-USG dataset. (a) reports validation
accuracy across communication rounds, while (b) reports validation
loss.}
\label{fig:comparative_validation_results}
    \label{fig:comparative_basleine}
\end{figure}

Furthermore, Fig.~\ref{fig:comparative_validation_results} compares SCM-A2G with two
geometry-aware baselines, FEDCOMPASS \cite{Wang2026FEDCOMPASS} and FedMRUR \cite{An2023FedMRUR}. 
Overall, SCM-A2G outperforms the geometry-aware baselines in both validation
accuracy and validation loss, supporting the claim that midpoint
self-consistency provides an additional stabilization benefit beyond
geometry-aware aggregation alone.

\section{Conclusion}

This paper introduced SCM-A2G-QFL, a self-consistent midpoint aggregation
framework for QoS- and geometry-aware quantum federated learning. The proposed
method addresses two coupled challenges in QFL: heterogeneous client reliability
and periodic QNN parameter geometry. Unlike conventional Euclidean aggregation,
SCM-A2G-QFL computes QoS-aware client weights, constructs a torus-consistent
candidate direction, and accepts the next global movement only when it remains
supported by its own midpoint. Theoretical analysis showed that the resulting
update is normalized, geometry-preserving, locally stable under small geometry
gains, and implicitly damped relative to direct A2G movement. IBM hardware
validation further demonstrated that aggregation geometry can induce physically
different quantum observables, confirming the practical relevance of
geometry-aware QNN aggregation. Overall, SCM-A2G-QFL provides a principled
server-side aggregation mechanism for stabilizing quantum federated learning
under noisy, heterogeneous, and geometry-sensitive conditions.

\bibliographystyle{IEEEtran}
\bibliography{ref}
\appendices

\appendix

\subsection{IBM Hardware Validation}
\label{subsec:ibm_hardware_validation}

\begin{table*}[htb]
\centering
\scriptsize
\caption{IBM Quantum cloud real validation of controlled angular aggregation outputs on \texttt{ibm\_fez}. 
The output angle \(\theta_{\mathrm{out}}\) is encoded as a single-qubit \(R_y(\theta)\) circuit and evaluated using the Pauli-\(Z\) expectation. 
\(\Delta\theta_t\) denotes the accepted angular movement from the current server parameter \(\theta_t=-170^\circ\).}
\label{tab:ibm_controlled_validation}
\begin{tabular}{c l r r r r r r}
\toprule
\textbf{Case} &
\textbf{Method} &
\(\boldsymbol{\theta_{\mathrm{out}}}\) &
\(\boldsymbol{\Delta\theta_t}\) &
\textbf{Depth} &
\(\boldsymbol{Z_{\mathrm{sim}}}\) &
\(\boldsymbol{Z_{\mathrm{IBM}}}\) &
\(\boldsymbol{| \Delta Z |}\) \\
\midrule

\multirow{4}{*}{A}
& Euclidean mean & -0.97 & 169.03 & 4 & 0.9999 & 0.9997 & 0.0002 \\
& Circular mean  & -0.95 & 169.05 & 4 & 0.9999 & 0.9987 & 0.0012 \\
& MP-A2G         & -146.55 & 23.45 & 4 & -0.8343 & -0.8457 & 0.0114 \\
& SCM-A2G        & -146.41 & 23.59 & 4 & -0.8331 & -0.8401 & 0.0071 \\

\midrule

\multirow{4}{*}{B}
& Euclidean mean & 0.00 & 170.00 & 0 & 1.0000 & 0.9977 & 0.0023 \\
& Circular mean  & -180.00 & -10.00 & 2 & -1.0000 & -0.9835 & 0.0165 \\
& MP-A2G         & -171.39 & -1.39 & 4 & -0.9887 & -0.9951 & 0.0064 \\
& SCM-A2G        & -171.40 & -1.40 & 4 & -0.9887 & -0.9977 & 0.0089 \\

\midrule

\multirow{4}{*}{C}
& Euclidean mean & 0.00 & 170.00 & 0 & 1.0000 & 1.0047 & 0.0047 \\
& Circular mean  & 0.00 & 170.00 & 0 & 1.0000 & 1.0032 & 0.0032 \\
& MP-A2G         & -163.06 & 6.94 & 4 & -0.9566 & -0.9659 & 0.0092 \\
& SCM-A2G        & -163.02 & 6.98 & 4 & -0.9564 & -0.9699 & 0.0135 \\

\midrule

\multirow{4}{*}{D}
& Euclidean mean & 58.07 & 228.07 & 4 & 0.5289 & 0.4963 & 0.0326 \\
& Circular mean  & 178.06 & -11.94 & 4 & -0.9994 & -0.9951 & 0.0043 \\
& MP-A2G         & -171.66 & -1.66 & 4 & -0.9894 & -0.9992 & 0.0098 \\
& SCM-A2G        & -171.67 & -1.67 & 4 & -0.9894 & -0.9946 & 0.0052 \\

\bottomrule
\end{tabular}
\vspace{1mm}
\begin{flushleft}
\footnotesize
All angles are reported in degrees. \(Z_{\mathrm{sim}}\) denotes the ideal simulator expectation and \(Z_{\mathrm{IBM}}\) denotes the IBM hardware expectation. 
\(|\Delta Z|=|Z_{\mathrm{IBM}}-Z_{\mathrm{sim}}|\). 
The SCM residual is not shown here because it is an algorithmic fixed-point quantity rather than a hardware-measured observable. 
All transpiled circuits used zero CNOT gates on the selected backend layout.
\end{flushleft}
\end{table*}

The hardware results show that Euclidean and circular/SCM-based aggregation can
produce substantially different quantum observables. For wrap-around cases,
Euclidean aggregation may yield angles whose ideal observables are positive,
whereas circular, MP-A2G, and SCM-A2G remain near the seam-consistent region and
produce negative observables close to the simulator prediction. This confirms
that the aggregation geometry affects the quantum STATE realised on hardware.

Importantly, the IBM experiment is not intended to claim that full federated QNN
training was executed on hardware. Rather, it provides hardware-backed
validation that the angles generated by SCM-A2G are executable on a real backend
and preserve the expected quantum observable behaviour.
All experiments utilized a single NVIDIA Tesla T4 graphics processing unit (GPU) and CUDA version 12.4 in a high-RAM runtime environment.

Table~\ref{tab:ibm_controlled_validation} validates that aggregation-induced angles produce the expected observable behavior on real IBM hardware. In the wrap-around case B, Euclidean averaging gives \(\theta=0^\circ\), producing a positive hardware expectation \(Z_{\mathrm{IBM}}=0.9977\). In contrast, circular, MP-A2G, and SCM-A2G remain near the seam-consistent region and produce negative expectations close to \(-1\). Similarly, in case D, the Euclidean aggregate gives \(Z_{\mathrm{IBM}}=0.4963\), whereas circular and SCM-based updates produce negative values close to simulator predictions. These results show that aggregation geometry has a direct physical effect on the quantum observable.

\section{Proof of Theorem~\ref{theorem:main_convergence_result}}
\label{app:proof_scm_convergence}

% ============================================================
% This appendix proves the main convergence theorem.
% We use Option A: the SCM residual enters the one-step descent
% bound as C_3 R_{\mathrm{SCM},t}^2.
% Therefore, the final convergence bound contains
% (C_3/(\beta T)) \sum_t R_{\mathrm{SCM},t}^2,
% and the simplified residual contribution becomes
% O(\varepsilon_{\mathrm{SCM}}^2/\beta).
% ============================================================

\subsection{Proof of Lemma~\ref{lemma:scm_one_step_descent}}

% ------------------------------------------------------------
% Step 1: Define the total perturbation term.
% This combines all non-ideal effects: local stochasticity,
% client heterogeneity, QoS weighting error, and midpoint error.
% ------------------------------------------------------------

Let
\[
\boldsymbol{\Xi}_t
=
\boldsymbol{\xi}_{l,t}
+
\boldsymbol{\xi}_{g,t}
+
\boldsymbol{\xi}_{q,t}
+
\boldsymbol{\xi}_{m,t}
\]
denote the total perturbation induced by stochastic local optimization,
client heterogeneity, QoS-dependent weighting, and midpoint approximation.

In the local tangent chart around \(\boldsymbol{\theta}_t\), let
\(\mathbf{u}_t^\star\) denote the accepted SCM-A2G server movement. By the
\(L\)-smoothness of \(F\), we have
\begin{equation}
F(\boldsymbol{\theta}_{t+1})
\leq
F(\boldsymbol{\theta}_t)
+
\left\langle
\nabla F(\boldsymbol{\theta}_t),
\mathbf{u}_t^\star
\right\rangle
+
\frac{L}{2}
\left\|
\mathbf{u}_t^\star
\right\|^2 .
\label{eq:appendix_smoothness_step}
\end{equation}

% ------------------------------------------------------------
% Step 2: Use the SCM fixed-point residual decomposition.
% The accepted movement u_t^* is close to a beta-scaled
% geometry-aware direction psi_t(u_t^*), up to the SCM residual.
% ------------------------------------------------------------

The SCM fixed-point relation can be written as
\begin{equation}
\mathbf{u}_t^\star
=
\beta_t
\boldsymbol{\psi}_t(\mathbf{u}_t^\star)
+
\mathbf{e}_{\mathrm{SCM},t},
\label{eq:appendix_scm_residual_decomposition}
\end{equation}
where \(\mathbf{e}_{\mathrm{SCM},t}\) denotes the fixed-point residual error.
The direction term admits the decomposition
\begin{equation}
\boldsymbol{\psi}_t(\mathbf{u}_t^\star)
=
-\nabla F(\boldsymbol{\theta}_t)
+
\boldsymbol{\Xi}_t .
\label{eq:appendix_direction_decomposition}
\end{equation}

Substituting \eqref{eq:appendix_direction_decomposition} into
\eqref{eq:appendix_scm_residual_decomposition} gives
\begin{equation}
\mathbf{u}_t^\star
=
-\beta_t\nabla F(\boldsymbol{\theta}_t)
+
\beta_t\boldsymbol{\Xi}_t
+
\mathbf{e}_{\mathrm{SCM},t}.
\label{eq:appendix_update_expansion}
\end{equation}

% ------------------------------------------------------------
% Step 3: Expand the descent inner product.
% The first term gives descent, and the remaining terms are errors.
% ------------------------------------------------------------

Therefore,
\begin{equation}
\begin{aligned}
\left\langle
\nabla F(\boldsymbol{\theta}_t),
\mathbf{u}_t^\star
\right\rangle
=
&-\beta_t
\left\|
\nabla F(\boldsymbol{\theta}_t)
\right\|^2 \\
&+
\beta_t
\left\langle
\nabla F(\boldsymbol{\theta}_t),
\boldsymbol{\Xi}_t
\right\rangle \\
&+
\left\langle
\nabla F(\boldsymbol{\theta}_t),
\mathbf{e}_{\mathrm{SCM},t}
\right\rangle .
\end{aligned}
\label{eq:appendix_inner_product_expansion}
\end{equation}

% ------------------------------------------------------------
% Step 4: Bound the perturbation term.
% Young's inequality separates the gradient norm from the error norm.
% ------------------------------------------------------------

Using Young's inequality and Assumptions~1--5, the total perturbation term
satisfies
\begin{equation}
\begin{aligned}
\mathbb{E}\!\left[
\beta_t
\left\langle
\nabla F(\boldsymbol{\theta}_t),
\boldsymbol{\Xi}_t
\right\rangle
\right]
\leq\;&
\frac{\beta_t}{4}
\mathbb{E}\!\left[
\left\|
\nabla F(\boldsymbol{\theta}_t)
\right\|^2
\right] \\
&+
C_1\beta_t
\left(
\sigma_l^2+\sigma_g^2+\rho_q^2G^2
\right) \\
&+
C_2\beta_t L^2
\mathbb{E}\!\left[
\left\|
\mathbf{u}_t^\star
\right\|^2
\right].
\end{aligned}
\label{eq:appendix_total_error_bound_option_a}
\end{equation}

% ------------------------------------------------------------
% Step 5: Bound the SCM residual contribution using Option A.
% Important:
% R_{SCM,t} is treated as a descent-scale residual contribution.
% Therefore, the residual term appears as C_3 R_{SCM,t}^2,
% not as (C_3/beta_t) R_{SCM,t}^2.
% ------------------------------------------------------------

By the SCM residual-control condition in Assumption~5, the fixed-point residual
contribution satisfies
\begin{equation}
\mathbb{E}\!\left[
\left\langle
\nabla F(\boldsymbol{\theta}_t),
\mathbf{e}_{\mathrm{SCM},t}
\right\rangle
\right]
\leq
\frac{\beta_t}{4}
\mathbb{E}\!\left[
\left\|
\nabla F(\boldsymbol{\theta}_t)
\right\|^2
\right]
+
C_3R_{\mathrm{SCM},t}^2 .
\label{eq:appendix_residual_bound_option_a}
\end{equation}

% ------------------------------------------------------------
% Step 6: Combine the bounds.
% The negative descent term is -beta_t ||grad F||^2.
% The two Young-type bounds each use beta_t/4 of the gradient norm.
% Hence the remaining descent coefficient is -beta_t/2.
% ------------------------------------------------------------

Substituting
\eqref{eq:appendix_inner_product_expansion},
\eqref{eq:appendix_total_error_bound_option_a}, and
\eqref{eq:appendix_residual_bound_option_a}
into \eqref{eq:appendix_smoothness_step}, and absorbing universal constants
into \(C_1,C_2,C_3\), yields
\begin{equation}
\begin{aligned}
\mathbb{E}\!\left[
F(\boldsymbol{\theta}_{t+1})
\right]
\leq\;&
\mathbb{E}\!\left[
F(\boldsymbol{\theta}_{t})
\right]
-
\frac{\beta_t}{2}
\mathbb{E}\!\left[
\left\|
\nabla F(\boldsymbol{\theta}_t)
\right\|^2
\right] \\
&+
C_1\beta_t
\left(
\sigma_l^2+\sigma_g^2+\rho_q^2G^2
\right) \\
&+
C_2\beta_t L^2
\mathbb{E}\!\left[
\left\|
\mathbf{u}_t^\star
\right\|^2
\right]
+
C_3R_{\mathrm{SCM},t}^2 .
\end{aligned}
\label{eq:appendix_scm_descent_final_option_a}
\end{equation}

This proves Lemma~\ref{lemma:scm_one_step_descent}.

% ============================================================
% Theorem proof
% ============================================================

\subsection{Proof of Theorem~\ref{theorem:main_convergence_result}}

% ------------------------------------------------------------
% Step 1: Set beta_t = beta and sum the one-step descent result.
% ------------------------------------------------------------

For a constant geometry gain \(\beta_t=\beta\), summing
\eqref{eq:appendix_scm_descent_final_option_a} over
\(t=0,\ldots,T-1\) gives
\begin{equation}
\begin{aligned}
\mathbb{E}
\left[
F(\boldsymbol{\theta}_{T})
\right]
\leq\;&
F(\boldsymbol{\theta}_{0})
-
\frac{\beta}{2}
\sum_{t=0}^{T-1}
\mathbb{E}
\left[
\left\|
\nabla F(\boldsymbol{\theta}_t)
\right\|^2
\right] \\
&+
C_1\beta T
\left(
\sigma_l^2+\sigma_g^2+\rho_q^2G^2
\right) \\
&+
C_2\beta L^2
\sum_{t=0}^{T-1}
\mathbb{E}
\left[
\left\|
\mathbf{u}_t^\star
\right\|^2
\right]
+
C_3
\sum_{t=0}^{T-1}
R_{\mathrm{SCM},t}^2 .
\end{aligned}
\label{eq:appendix_summed_descent_option_a}
\end{equation}

% ------------------------------------------------------------
% Step 2: Use lower boundedness of the objective.
% Since F(theta_T) >= F^*, replace the left-hand side by F^*.
% ------------------------------------------------------------

Since \(F\) is lower bounded by \(F^\star\), we have
\[
\mathbb{E}
\left[
F(\boldsymbol{\theta}_{T})
\right]
\geq
F^\star .
\]
Rearranging \eqref{eq:appendix_summed_descent_option_a} and dividing by
\(\beta T/2\) yields
\begin{equation}
\begin{aligned}
\frac{1}{T}
\sum_{t=0}^{T-1}
\mathbb{E}
\left[
\left\|
\nabla F(\boldsymbol{\theta}_t)
\right\|^2
\right]
\leq\;&
\frac{2\left(F(\boldsymbol{\theta}_0)-F^\star\right)}
{\beta T} \\
&+
2C_1
\left(
\sigma_l^2+\sigma_g^2+\rho_q^2G^2
\right) \\
&+
\frac{2C_2L^2}{T}
\sum_{t=0}^{T-1}
\mathbb{E}
\left[
\left\|
\mathbf{u}_t^\star
\right\|^2
\right] \\
&+
\frac{2C_3}{\beta T}
\sum_{t=0}^{T-1}
R_{\mathrm{SCM},t}^2 .
\end{aligned}
\label{eq:appendix_scm_convergence_bound_option_a}
\end{equation}

% ------------------------------------------------------------
% Step 3: Absorb numerical constants.
% This gives the same form as the theorem statement in the main paper.
% ------------------------------------------------------------

Absorbing fixed numerical factors into \(C_1,C_2,C_3\), we obtain
Eq.~\eqref{eq:main_convergence_bound}.

% ------------------------------------------------------------
% Step 4: Simplified residual rate.
% Under the stopping condition R_SCM,t <= epsilon_SCM,
% the average residual term is bounded by epsilon_SCM^2.
% Therefore, the final residual contribution scales as
% O(epsilon_SCM^2 / beta), not O(epsilon_SCM^2 / beta^2).
% ------------------------------------------------------------

If the SCM solver is run until
\(R_{\mathrm{SCM},t}\leq \varepsilon_{\mathrm{SCM}}\) for all \(t\), then
\[
\frac{1}{T}
\sum_{t=0}^{T-1}
R_{\mathrm{SCM},t}^2
\leq
\varepsilon_{\mathrm{SCM}}^2.
\]
Therefore, the residual contribution satisfies
\[
\frac{C_3}{\beta T}
\sum_{t=0}^{T-1}
R_{\mathrm{SCM},t}^2
\leq
\frac{C_3}{\beta}
\varepsilon_{\mathrm{SCM}}^2,
\]
which gives
\[
\mathcal{O}
\left(
\frac{\varepsilon_{\mathrm{SCM}}^2}{\beta}
\right).
\]
This yields the simplified convergence rate in
Eq.~\eqref{eq:scm_simplified_rate}.
\vspace{12pt}

\end{document}